\newcommand{\dd}{\mathrm{d}}
\newcommand{\SO}{\mathrm{SO}}
\newcommand{\R}{\mathbb{R}}
\theoremstyle{plain}
\newtheorem{theorem}{Theorem}[section]
\newtheorem{corollary}[theorem]{Corollary}
\theoremstyle{definition}
\newtheorem{definition}[theorem]{Definition}
\newtheorem{example}[theorem]{Example}
\theoremstyle{remark}
\newtheorem{remark}[theorem]{Remark}
\title{Equivariant Manifold Neural ODEs\\and Differential Invariants}
\date{}
\begin{document}

\maketitle

\vspace*{-10mm}
\begin{center}
\begin{minipage}[t]{0.8\textwidth}
\textbf{Emma Andersdotter} \hfill {\footnotesize \tt emma.andersdotter@umu.se}\\
\textit{\footnotesize Department of Mathematics and Mathematical Statistics\\
Ume{\aa} University\\ Ume{\aa} SE-901 87, Sweden}
\end{minipage}
\end{center}

\begin{center}
\begin{minipage}[t]{0.8\textwidth}
\textbf{Daniel Persson} \hfill {\footnotesize \tt daniel.persson@chalmers.se} \\
\textit{\footnotesize Department of Mathematical Sciences\\ Chalmers University of Technology and Gothenburg University\\Gothenburg SE-412 96, Sweden}
\end{minipage}
\end{center}

\begin{center}
\begin{minipage}[t]{0.8\textwidth}
\textbf{Fredrik Ohlsson}  \hfill {\footnotesize \tt fredrik.ohlsson@umu.se}\\
\textit{\footnotesize Department of Mathematics and Mathematical Statistics\\
Ume{\aa} University\\ Ume{\aa} SE-901 87, Sweden}
\end{minipage}
\end{center}
\vspace*{5mm}

\begin{abstract}
\noindent In this paper, we develop a manifestly geometric framework for equivariant manifold neural ordinary differential equations (NODEs) and use it to analyse their modelling capabilities for symmetric data. First, we consider the action of a Lie group $G$ on a smooth manifold $M$ and establish the equivalence between equivariance of vector fields, symmetries of the corresponding Cauchy problems, and equivariance of the associated NODEs. We also propose a novel formulation, based on Lie theory for symmetries of differential equations, of the equivariant manifold NODEs in terms of the differential invariants of the action of $G$ on $M$, which provides an efficient parameterisation of the space of equivariant vector fields in a way that is agnostic to both the manifold $M$ and the symmetry group $G$. Second, we construct augmented manifold NODEs, through embeddings into flows on the tangent bundle $TM$, and show that they are universal approximators of diffeomorphisms on any connected $M$. Furthermore, we show that universality persists in the equivariant case and that the augmented equivariant manifold NODEs can be incorporated into the geometric framework using higher-order differential invariants. Finally, we consider the induced action of $G$ on different fields on $M$ and show how it can be used to generalise previous work, on, e.g., continuous normalizing flows, to equivariant models in any geometry. 
\end{abstract}

\newpage
\tableofcontents
\section{Introduction}\label{sec:introduction}
Recent years have seen a 'geometrisation' of neural networks, driven by the need to deal with data defined on non-Euclidean domains, for example, graphs or manifolds. The field of geometric deep learning~\cite{Bronstein2017,Bronstein2021,Gerken2020} aims to incorporate the geometry of data in a foundational mathematical description of neural networks. This endeavour has been extremely successful in the context of  group equivariant convolutional neural networks (CNNs), in which the symmetries of the underlying data are built into the neural network using techniques from group theory and representation theory. 

Another development in the same spirit explores the connection to differential equations and dynamical systems in the limit of infinitely deep networks. When formulated in terms of the dynamics propagating information through the network, the learning problem becomes amenable to powerful numerical techniques for differential equations and a rich theory of dynamical systems. In particular, neural ordinary differential equations (NODEs) have received considerable attention since they were proposed in their current form in~\cite{Chen2018}. 

NODE models were originally conceived by considering the continuous dynamical systems obtained in the limit of infinitely deep residual neural networks~\cite{He2016}, building on previous similar constructions considered in~\cite{E2017,Haber2018,Ruthotto2020}. The dynamical system describes the evolution of a state $u(t) \in \mathbb{R}^n$ according to a governing ordinary differential equation (ODE)  
\begin{equation}
\label{eq:NODE-Rn}
    \dot{u}(t) = \phi(u,t) \,,
\end{equation}
where $\phi : \R^n \times \R \to \R^n$ is a vector valued function parameterising the dynamics. The NODE model is then given by the map $h : \R^n \to \R^n$ defined by evolving the input $x=u(0)$ to the output $h(x) = u(1)$. In $\R^n$, this can be obtained by direct integration of the vector field $\phi$ as
\begin{equation*}
    h(x) = x + \int_0^1 \phi(u,t) \, \dd t \,.
\end{equation*}
In~\cite{Chen2018}, $\phi$ was parameterised using a fully connected feed-forward neural network, and backpropagation of gradients by solving an adjoint ODE related to~\eqref{eq:NODE-Rn} was used to enable learning $\phi$, extending previous results in~\cite{stapor2018}.

NODE models possess several attractive theoretical properties. Of particular conceptual interest is the bijectivity of $h$, bestowed by the uniqueness of integral curves guaranteed by the Picard-Lindel{\"o}f theorem, which allows the application to generative models of probability densities~\cite{Rezende2015,Chen2018}.

A natural generalisation of neural differential equations in geometric deep learning is to consider ODEs on a smooth manifold $M$, rather than $\R^n$, to accommodate non-Euclidean data. Such manifold neural ODEs were introduced in the concurrent works~\cite{Falorsi2020,Lou2020,Mathieu2020}. The situation where the data exhibits symmetries under some group $G$ of transformations naturally entails the construction of equivariant NODEs which were obtained in~\cite{Kohler2020} for the Euclidean setting and more recently in~\cite{Katsman2021} for Riemannian manifolds $M$.

In this paper, we develop the mathematical foundations of equivariant manifold NODEs using their description in terms of differential geometry. In the past few years, the field of neural ODEs has seen several interesting developments in different directions such as novel model constructions, efficient implementations and competitive performance on applied problems. In~\cref{sec:related-work}, we provide an overview of the most relevant previous works and how they relate to our results. Our aim in the present paper is to complement these recent developments by providing a geometric framework for manifold neural ODEs, incorporating Lie theory of symmetries of differential equations. We then use the framework to analyse the theoretical modelling and approximation capabilities of manifold NODEs and to provide geometrical insight into their properties.

\subsection{Summary of results}
We first provide some mathematical background on differential geometry and Lie theory in~\cref{sec:mathematical_preliminaries}. In~\cref{sec:geom_framework} we then construct a manifestly geometric framework for NODEs on arbitrary smooth manifolds $M$ equivariant under the action of a connected Lie group $G$. In particular, we generalise previous constructions by removing assumptions on a metric structure on $M$ and establishing a stronger version of the fundamental relationship between the equivariance of the NODE model $h:M \to M$ and the corresponding ODE on $M$ (\cref{thm:main-equiv-NODE}). In addition, we provide a novel way of describing the NODE model space, based on the classical Lie theory of symmetries of differential equations, by parameterising equivariant vector fields $\phi$ on $M$ using differential invariants of the action of $G$ (\cref{thm:invariants-node}).

Using the geometric framework, we then investigate the approximation capabilities of manifold NODEs in 
\cref{sec:aug-eq-ndes}. In particular, we show how NODEs can be augmented in a way that respects the manifold structure of $M$ by embedding the diffeomorphism $h: M \to M$ in a flow on the tangent bundle $TM$ (\cref{thm:embed-flow}). We prove that the augmented NODE models are universal approximators when $M$ is connected (\cref{cor:node-univ-approx}). We proceed to show that the embedding into $TM$ is compatible with the induced action of $G$ on $TM$ (\cref{thm:embed-equiv-flow}), which proves that equivariant NODEs are universal approximators of equivariant diffeomorphisms (\cref{cor:equiv-node-univ-approx}). Furthermore, we show how the space of augmented equivariant NODEs can be parameterised in terms of higher-order differential invariants of $G$ (\cref{thm:invariants-aug-node}).

Finally, in~\cref{sec:eq-fields-and-den} we use the induced action of the diffeomorphism $h$ to construct NODE models for different types of fields on $M$. In particular, we consider explicitly equivariant scalars, densities (\cref{thm:induced-action-density}) and vector fields on $M$ (\cref{thm:induced-action-vector}).

The main contributions of the present paper are:

\begin{itemize}
    \item We develop a manifestly geometric framework for NODEs on an arbitrary smooth manifold $M$, equivariant with respect to the action of a connected Lie group $G$. In particular, we use the differential invariants of $G$ to parameterise the space of equivariant manifold NODEs.
    \item We show how NODEs can be augmented within our geometric framework using higher-order differential invariants in a way that respects both the manifold structure of $M$ and the equivariance under $G$. We prove that the resulting (equivariant) models are universal approximators for (equivariant) diffeomorphisms $h : M \to M$ of connected manifolds.
    \item We show how our framework can be used to model different kinds of equivariant densities and fields on $M$, using the induced action of the diffeomorphism $h$.
\end{itemize}

\subsection{Related work}\label{sec:related-work}
In this section, we provide an overview of important previous results in the literature related to equivariant manifold NODEs in general and our constructions and results in particular. In line with the focus of the present paper, we emphasise theoretical developments and contextualise our results where appropriate.

\paragraph{Equivariant continuous normalizing flows} Normalizing flows are generative models that construct complicated probability distributions by transforming a simple probability distribution (e.g., the normal distribution) using a sequence of invertible maps~\cite{Rezende2015,Chen2019}. Similar to residual networks, the sequence can be viewed as an Euler discretisation. In~\cite{Chen2018}, this connection was used to show that NODEs can model infinite sequences of infinitesimal normalizing flows, resulting in continuous normalizing flows (CNFs). This construction has been further developed in several directions, e.g., in~\cite{Grathwohl2019,Onken2021}.

Equivariant continuous normalizing flows on Euclidean space $\mathbb{R}^n$ were introduced in~\cite{Kohler2020}, extending previous constructions in~\cite{GarciaSatorras2021,Kohler2019,Rezende2019}. The prior and target densities in these normalizing flows share the same symmetries, provided that the maps are equivariant, which is equivalent to equivariance of the vector field generating the flow. 

\paragraph{Manifold neural ODEs} The mathematical framework required for extending NODEs to manifolds was developed in~\cite{Falorsi2020} which also provides a rigorous description of the adjoint method for backpropagation in terms of the canonical symplectic structure on the cotangent bundle. Explicit implementations of both forward and backward integration on Riemannian manifolds using dynamical trivialisations were developed in~\cite{Lou2020} based on the exponential map. The application considered for the manifold NODEs in these works is mainly restricted to continuous normalizing flows~\cite{Mathieu2020}. In this context~\cite{Rezende2020}, developing finite normalizing flows on tori and spheres, and~\cite{Katsman2023}, considering Riemannian residual networks, also deserve to be mentioned. 

Closest to the results presented in this paper is~\cite{Katsman2021} which considers flows on Riemannian manifolds equivariant under subgroups of isometries. In contrast, our work generalises the construction to a framework that accommodates arbitrary symmetry groups and manifolds and extends it from densities to more general fields on $M$ without requiring the existence of a metric.

\paragraph{Universality of neural ODEs} Since NODEs model invertible transformations as flows on $M$, which are restricted in that flow lines cannot intersect on $M$, not all diffeomorphisms $h:M \to M$ can be approximated using NODEs realised on the manifold $M$ itself. However, any diffeomorphism $h : M \to M$ can be expressed as a flow on some ambient space where $M$ is embedded~\cite{Utz1981}. This idea was first implemented in~\cite{Dupont2019} to define augmented NODEs for $M=\mathbb{R}^n$, and further developed in~\cite{Zhang2020} to prove that augmented NODEs on $\mathbb{R}^{2n}$ are universal approximators of diffeomorphisms $h : \mathbb{R}^n \to \mathbb{R}^n$. Using augmentation,~\cite{Bose2021} extended the universality to equivariant finite normalizing flows on $\mathbb{R}^n$. However, to the best of our knowledge, augmentation of manifold NODEs has not been previously considered in the literature. In this work, we use our geometric framework to construct augmented manifold NODEs and establish their universality for connected $M$. Furthermore, we generalise the construction to the equivariant case and show that universality persists in the presence of a non-trivial symmetry group $G$.

\paragraph{Differential invariants in neural networks}
An unresolved problem in the theory of manifold neural ODEs is to parameterise the space of vector fields, or at least a sufficiently large subset of them, to obtain expressive models~\cite{Falorsi2020}. In the equivariant setting, this has been accomplished by considering gradient flows of invariant potential functions on Euclidean spaces~\cite{Kohler2020} and Riemannian manifolds~\cite{Katsman2021}, respectively. This strategy, while practically appealing, restricts models to conservative vector fields.

Our approach to parameterising the neural ODE models is based on the theory of symmetries of differential equations (see, e.g., the seminal text~\cite{Olver1993}) by considering differential invariants of the action of the symmetry group $G$ on the manifold $M$. Recently, a similar approach was proposed in~\cite{Knibbeler2024} to address the parameterisation of invariant sections of homogeneous vector bundles related to equivariant convolutional neural networks~\cite{Kondor2018,Cohen2019,Aronsson2022}.

The theory for symmetries of differential equations was also used in~\cite{AkhoundSadegh2023} to create physics-informed networks (PINNs) where a loss term incorporating the infinitesimal generators of symmetry transformations is used to enforce approximate invariance of the equations under the action of the symmetry group. Even closer to our approach of using differential invariants is~\cite{Arora2024}, which constructs solutions to an ODE by first mapping to the space spanned by the differential invariants, learning a solution to the invariantised ODE using a PINN, and reconstructing the corresponding solution to the original equation. In both cases, PINNs are used to learn solutions to specific differential equations, in contrast to our approach where differential invariants are used to parameterise the space of equivariant differential equations and we attempt to learn the one which best approximates the diffeomorphism $h: M \to M$ mapping input data to output data. Somewhat conversely, equivariant CNNs were used in~\cite{Lagrave2022} to learn differential invariants of partial differential equations (PDEs) and use them to derive symmetry-preserving finite difference solution algorithms.

\paragraph{Flow matching}
An important recent contribution to the field of NODEs in the form of CNFs is the development of the flow matching (FM) framework that eliminates the need to perform the computationally expensive ODE simulations and gradient evaluations required in the maximum likelihood training approach originally proposed in~\cite{Chen2018}. The FM paradigm amounts to directly regressing the vector field generating the normalizing flow to the tangent vector of some desired path in the space of probability distributions. This approach was first introduced for the Euclidean case in~\cite{Lipman2023}, where the target probability path and corresponding vector fields were defined by marginalising over probability paths conditioned on the data samples to obtain conditional flow matching (CFM). The method was then further developed in~\cite{Tong2024} where mixtures of probability paths and vector fields conditioned on general latent variables were considered, and, in particular, optimal transport CFM (OT-CFM) was obtained by using the Wasserstein optimal transport map as the latent distribution.

The flow matching framework was extended to Riemannian manifolds in~\cite{Chen2024} at the expense of still requiring ODE simulations for manifolds without closed-form expressions for the geodesics. This result was then applied in~\cite{Yim2023,Bose2024} to construct equivariant models for protein structure generation by considering equivariant CNFs on the manifold $\mathrm{SE}(3)$. A general equivariant framework for flow matching in the Euclidean case was obtained in~\cite{Klein2023} by minimizing over the orbit of the symmetry group in the distance function used in OT-CFM, which amounts to aligning each sample pair along their respective orbits.

In the context of the work we present in this paper, it is important to emphasize that flow matching is a framework for training CNFs rather than a class of NODE models. However, the FM approach can be directly applied to CNF models, including augmented models, in our framework to provide scalable and efficient training based on the parameterisation of NODE model space in terms of differential invariants.

\section{Mathematical preliminaries}\label{sec:mathematical_preliminaries}

In this section, we review some mathematical concepts that are relevant to the remainder of the article. The reader is assumed to have a working knowledge of differential geometry and group theory.

\subsection{Group actions and flows on manifolds}
Let $G$ be a Lie group with Lie algebra $\mathfrak{g}$. The left action $\alpha:G\times M\to M$ of a Lie group $G$ on a manifold $M$ is an operator endowed with the identity and associativity property of the group structure of $G$. We introduce the operator $L_g:M\to M$ satisfying $L_g(p)=\alpha(g,p)$ for each $g\in G$ and $p\in M$. The \textit{orbit} of a point $p\in M$ is given by
\begin{equation*}
\mathcal{O}_p
    =\{L_g(p)\,:\,g\in G\}.
\end{equation*}
The action of $G$ is called \textit{semi-regular} if all orbits have the same dimension.

A \textit{vector field} $\phi$ is a smooth assignment of a tangent vector to each $p \in M$ and a \textit{flow} on $M$ is a map $\Phi : \mathbb{R} \times M \to M$ which is an action of the additive group $\mathbb{R}$, i.e., $\Phi(0,p)=p$ and $\Phi(s,\Phi(t,p))=\Phi(s+t,p)$ for all $p\in M$ and $s,t\in\mathbb{R}$. The flow $\Phi$ is generated by the vector field $\phi$ if for every point $p \in M$ we have
\begin{equation}\label{eq:gen-vec-field}
    \left. \frac{d}{dt} \Phi(t,p) \right|_{t=0} = \phi_p \,.
\end{equation}
Such a flow can be viewed as a solution of an ODE. Let $u:\mathbb{R}\to M$ be the integral curve solving the initial value problem
\begin{equation*}
    \frac{du}{dt} = \phi_{u(t)} \,, \quad u(0) = p \,.
\end{equation*}
It follows from $\Phi(0,p)=p$, equation~\eqref{eq:gen-vec-field} and the uniqueness of $u$, guaranteed by the Picard-Lindel{\"o}f theorem, that $u(t)=\Phi(t,p)$. Thus, the flow $\Phi$ generated by the vector field $\phi$ is the collection of all integral curves of $\phi$.

Given a vector field $\phi$, its corresponding flow is referred to as the \textit{exponentiation} of $\phi$
\begin{equation*}
    \Phi(t,p)=\exp{(t\phi)}(p).
\end{equation*}
It follows from the properties of a flow that $\exp$ satisfies $\exp{(0\phi)}p=p$, $\frac{d}{dt}\exp{(t\phi)p}=\phi_{\exp(t\phi)p}$ and $\exp{(t\phi)}\exp{(s\phi)}=\exp{((s+t)\phi)}$ for all $s,t\in\mathbb{R}$.

Given two flows $\sigma$ and $\psi$ generated by the vector fields $X$ and $Y$, respectively, the \textit{Lie derivative} $\mathcal{L}_XY$ is defined as the rate of change of $Y$ along $\sigma$. One can show that the Lie derivative of a vector field is given by the Lie bracket, $\mathcal{L}_XY=[X,Y]$.

The \textit{push-forward} (or \textit{differential}) of a smooth map $f:M\to N$ between smooth manifolds at a point $p\in M$ is a map $df_p:T_pM\to T_{f(p)}N$ such that, for all differentiable functions $h$ defined on a neighbourhood of $f(p)$,
\begin{equation*}
    (df_p(X_p))(h)=X_p(h\circ f),
\end{equation*}
where $X_p\in T_pM$. If $\gamma$ is a curve on $M$ such that $\gamma(0)=p$ and $\dot{\gamma}(0)=X_p$, this can also be written as
\begin{equation*}
    (df_p(X_p))(h)=\left.\frac{d}{dt}(h\circ f\circ\gamma(t))\right|_{t=0}.
\end{equation*}
Throughout this paper, we will primarily consider the push-forward of the left action $L_g:M\to M$ and denote this by $(L_g)_*:T_pM\to T_{L_gp}M$. To make the expression less cumbersome, we suppress the reference to $p$ in the notation $(L_g)_*$.

Let $f:M\to N$ be a map between two manifolds $M$ and $N$ each equipped with a left group action of $G$. Then $f$ is called \textit{equivariant} if $f\circ L_g={L}_g\circ f$ holds for all $g\in G$. A special case of equivariance is \textit{invariance}, i.e., $f\circ L_g=f$ for all $g\in G$. When determining the equivariance properties of manifold neural ODEs, we will primarily consider equivariance of vector fields, flows and diffeomorphisms. Vector fields and flows were defined above. A diffeomorphism is a smooth bijection $h : M \to M$ with a smooth inverse. Below, we define what equivariance means for each respective object.

\begin{definition}\label{dfn:flow_equivariance}
    A flow $\Phi:\mathbb{R}\times M\to M$ is  equivariant if $L_g\Phi(t,p)=\Phi(t,L_gp)$ for all $t\in\mathbb{R}$, $p\in M$ and $g\in G$.
\end{definition}

\begin{definition}\label{dfn:vector_field_equivariance}
    A vector field $\phi:M\to TM$ is  equivariant if $\phi_{L_gp}=(L_g)_*\phi_p$ for all $p\in M$ and $g\in G$.
\end{definition}

\begin{definition}\label{dfn:diffeomorphism_equivariance}
    A diffeomorphism $h:M\to M$ is equivariant if $L_g\circ h= g\circ L_g$ for all $g\in G$.
\end{definition}

\subsection{Symmetries of differential equations}\label{sec:diff_invariants}
We will now briefly review the general theory of Lie point symmetries of differential equations as originally envisioned by Lie and described geometrically in, e.g.,~\cite{Olver1993,Olver1995}. We consider a general ODE of the form
\begin{equation*}
    u^{(k)} = \phi\left(t,u,u^{(1)},\ldots,u^{(k-1)}\right) \,, \quad u^{k}(t) = \frac{d^k}{dt^k}u(t) \,,
\end{equation*}
or equivalently $\Delta\left(t,u,u^{(1)},\ldots,u^{(k)}\right)=u^{(k)}-\phi\left(t,u,u^{(1)},\ldots,u^{(k-1)}\right)=0$.

The solution $u(t)$ of an ODE is a curve on a manifold $M$ depending on a variable $t$. Let $T \cong \mathbb{R}$ be the space parameterised by the independent variable and $M$ the space of dependent variables. We define the \textit{total space} as the space $E=T \times M$. A smooth curve $u:\mathbb{R}\to M$ has \textit{graph} $\gamma_u=\{(t,u(t))\}\subset E$. The general theory allows for actions of $G$ on both the independent variables $T$ and the dependent variables $M$. In the application to NODEs, symmetries are considered only in terms of transformations of the inputs and outputs, and not as acting on the time $t$ parameterising the internal dynamics of the model. Thus, we restrict the left action of $G$ to only the dependent variables $M$, i.e., the left action on a graph $\{(t,u(t)\}\subset E$ is given by $\{(t,L_gu(t)\}$ for each $g\in G$.

The fundamental objects describing a Lie group $G$ of point transformations acting on the total space $E$, are the induced vector fields $X_i$, $i=1,\ldots,d = \mathrm{dim}\,G$ on $E$ defining a basis for the Lie algebra $\mathfrak{g}$. The induced vector fields generate the actions of $G$ via exponentiation, i.e., 
for each $g\in G$ there is a corresponding vector field $X_g$ such that $L_gp=(\exp{X_g})p$ for each $p\in M$. The group $G$ is a symmetry group of $\Delta$ if it preserves the solution space of $\Delta$, i.e., for every solution $u(t)$ and group element $g \in G$, the transformed function $L_g( u(t))$ is also a solution to $\Delta = 0$.

The geometric description uses the concept of jet bundles $J^{(k)}E$ which extends the space $E=T \times M$ to include the derivatives $u^{(1)},\ldots,u^{(k)}$. It is customary to write $J^{(k)}E=T\times M^{(k)}$, where $M^{(k)}$ is a manifold of dimension $k\dim{M}$ containing all the points in $M$ as well as their derivatives up to order $k$. The graph $\gamma_u$ of a smooth curve $u:\mathbb{R}\to M$ can be \textit{prolonged} to $J^{(k)}E$ as $\gamma_u^{(k)}=\{(t,u(t),\partial_tu(t),\dots,\partial_t^ku(t))\}\subset J^{(k)}E$. The \textit{prolonged group action} $L_g^{(k)}$ on the graph $\gamma_f^{(k)}$ is then defined as $L_g^{(k)}\gamma_f^{(k)}=\gamma_{L_gf}^{(k)}$. The induced action of $X$ on the derivatives is generated by the \textit{prolonged vector field} $X^{(k)}$, which can be used to express the symmetry condition succinctly in its infinitesimal form
\begin{equation*}
    \left. X^{(k)} \left( u^{(k)} - \phi\left(t,u,u^{(1)},\ldots,u^{(k-1)}\right) \right) \right|_{\Delta=0} = 0\,. 
\end{equation*}
Note that the restriction of $L_g$ to $M$ implies that $L_g^{(k)}$ and the corresponding generating vector fields are restricted to $M^{(k)}$.

Elementary results from the theory of algebraic equations on manifolds, applied to the manifold $J^{(k)}E$, can then be used to construct the most general ODE which is equivariant under the symmetry group $G$. The construction is based on \textit{differential invariants}, i.e., functions $I : J^{(k)}E \to \R$ which are invariant under the action of $G$, which infinitesimally means $X_i^{(k)}(I)=0$ for $i=1,\ldots,d$. Since a function of a set of invariants is also an invariant, we only need to consider the complete set of functionally independent invariants. The number of such invariants is given by the following \namecref{lemma:invariants-dim}.

\begin{theorem}[Theorem~2.34~\cite{Olver1995}]
\label{lemma:invariants-dim}
    If the prolonged group $G^{(k)}$ of a group $G$ acts semi-regularly on $J^{(k)}E$ with orbit dimension $s_k$, there are $\dim{J^{(k)}E}-s_k$ functionally independent local differential invariants of order $k$.
\end{theorem}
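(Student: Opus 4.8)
\textit{Proof proposal.} The plan is to recognise this statement as an instance of the Frobenius theorem. On the manifold $N=J^{(k)}E$ the infinitesimal generators of the prolonged action span an involutive distribution whose rank is, by semi-regularity, the constant orbit dimension $s_k$; the differential invariants of order $k$ are exactly the functions that are locally constant along the leaves of this distribution, and a complete set of functionally independent such functions is furnished by the $\dim J^{(k)}E-s_k$ coordinates transverse to the leaves in a flat chart. An equivalent route is to apply the constant-rank theorem to the orbit map $g\mapsto L_g^{(k)}z$ and realise each orbit locally as a common level set of $\dim J^{(k)}E-s_k$ functions (this is the ``algebraic equations on manifolds'' viewpoint), but I would lead with the Frobenius argument.

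First I would set up the relevant distribution. For a Lie group action each orbit is an immersed submanifold whose tangent space at a point $z$ is the span of the fundamental vector fields at $z$; applied to $G^{(k)}$ acting on $N=J^{(k)}E$ this gives $T_z\mathcal{O}_z=\operatorname{span}\{X_1^{(k)}|_z,\ldots,X_d^{(k)}|_z\}$ with $d=\dim G$. Semi-regularity asserts precisely that $\dim\mathcal{O}_z=s_k$ for every $z$, so the assignment $\mathcal{D}\colon z\mapsto\operatorname{span}\{X_i^{(k)}|_z\}$ is a smooth distribution of constant rank $s_k$ on $N$. Since prolongation $X_i\mapsto X_i^{(k)}$ is a Lie algebra homomorphism, $[X_i^{(k)},X_j^{(k)}]=\sum_l c_{ij}^l X_l^{(k)}$ with the structure constants of $\mathfrak{g}$, so $\mathcal{D}$ is involutive.

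Next I would invoke the Frobenius theorem for the constant-rank involutive distribution $\mathcal{D}$: around any point of $N$ there is a chart $(x^1,\ldots,x^m)$, with $m=\dim J^{(k)}E$, in which $\mathcal{D}=\operatorname{span}\{\partial_{x^1},\ldots,\partial_{x^{s_k}}\}$. A local smooth function $I$ is a differential invariant of order $k$ iff $X_i^{(k)}(I)=0$ for all $i$, i.e. iff $dI$ annihilates $\mathcal{D}$, i.e. iff $\partial I/\partial x^1=\cdots=\partial I/\partial x^{s_k}=0$, i.e. iff $I$ depends only on $x^{s_k+1},\ldots,x^m$. (Because $G$ is connected, as assumed throughout, this is also equivalent to $I$ being constant on connected components of the orbits, but only the infinitesimal form is needed for the count.) Hence the $m-s_k$ coordinate functions $x^{s_k+1},\ldots,x^m$ are themselves differential invariants and are functionally independent, while any collection of functionally independent differential invariants has pointwise linearly independent differentials lying in the annihilator $\mathcal{D}^\circ\subset T^*N$, which has dimension $m-s_k$; so no such collection exceeds $m-s_k$ elements. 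Therefore the number of functionally independent local differential invariants of order $k$ is exactly $\dim J^{(k)}E-s_k$.

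The main obstacle — indeed essentially the only substantive point — is securing the constant-rank hypothesis that Frobenius requires, which is exactly what semi-regularity delivers, together with the identification of the rank of $\mathcal{D}$ with the common orbit dimension $s_k$ via the fundamental-vector-field description of $T_z\mathcal{O}_z$. Once that is in place, the remainder is the routine passage from an integrable distribution to a maximal set of transverse first integrals.
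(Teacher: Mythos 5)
Your argument is correct and is essentially the standard proof of this result (the paper itself only cites Olver's Theorem~2.34 without proof, and Olver's argument is exactly this one): semi-regularity gives a constant-rank involutive distribution spanned by the prolonged infinitesimal generators, Frobenius supplies flat coordinates, and the $\dim J^{(k)}E - s_k$ transverse coordinates form a maximal functionally independent set of local invariants. Nothing is missing; the remark on involutivity via the fact that prolongation is a Lie algebra homomorphism is the right way to secure the Frobenius hypothesis.
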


Having a complete set of functionally independent differential invariants makes it possible to express the most general $G$-equivariant ODE.

\begin{theorem}[Proposition~2.56~\cite{Olver1993}] \label{thm:general-sym-ODE}
Let $G$ be a Lie group acting semi-regularly on $E = T \times M$ and let $I_1,\ldots,I_{\mu_k}$ be a complete set of functionally independent differential invariants of order $k$. Then, the most general ODE of order $k$ for which $G$ is a symmetry group is locally on the form
\begin{equation*}
    F(I_1,\ldots,I_{\mu_k}) = 0
\end{equation*}
for an arbitrary function $F : \mathbb{R}^{\mu_k} \to \mathbb{R}^n$, where $n$ is the dimension of $M$.
\end{theorem}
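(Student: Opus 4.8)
The plan is to translate the analytic statement about order-$k$ ODEs into a statement about $G^{(k)}$-invariant subvarieties of the jet bundle $J^{(k)}E$, and then apply the local structure theory of semi-regular group actions (Frobenius together with the constant-rank theorem) and \cref{lemma:invariants-dim}.

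First I would encode the ODE $\Delta(t,u,u^{(1)},\ldots,u^{(k)}) = u^{(k)} - \phi(t,u,\ldots,u^{(k-1)}) = 0$ by its solution variety $\calS_\Delta = \{\Delta = 0\} \subset J^{(k)}E$, which — since $\Delta = u^{(k)} - \phi$ has $n = \dim M$ independent components — is a submanifold of codimension $n$ projecting diffeomorphically onto $J^{(k-1)}E$. The first thing to prove is the equivalence: $G$ is a symmetry group of $\Delta$ if and only if $\calS_\Delta$ is $G^{(k)}$-invariant. One direction follows from the prolongation formalism recalled in \cref{sec:diff_invariants}: the prolonged graphs $\gamma_u^{(k)}$ of solutions are exactly the integral submanifolds contained in $\calS_\Delta$, and since $L_g^{(k)}\gamma_u^{(k)} = \gamma_{L_g u}^{(k)}$, invariance of the solution set forces $L_g^{(k)}\calS_\Delta = \calS_\Delta$ on the locus swept out by prolonged graphs; the converse implication, and the fact that this locus exhausts $\calS_\Delta$, rely on $\Delta$ being of maximal rank and locally solvable, assumptions tacitly built into calling it ``an ODE of order $k$''. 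Infinitesimally this is the condition $X_i^{(k)}(\Delta)\big|_{\Delta = 0} = 0$ for $i = 1,\ldots,d$.

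Next I would establish a local normal form for the prolonged action near a generic point. Semi-regularity of $G^{(k)}$ says that $X_1^{(k)},\ldots,X_d^{(k)}$ span a distribution of constant rank $s_k$; it is involutive because $[X_i^{(k)},X_j^{(k)}] = [X_i,X_j]^{(k)}$ again lies in this span, so Frobenius integrates it to a local foliation whose leaves are the connected components of the $G^{(k)}$-orbits. By \cref{lemma:invariants-dim} there are exactly $\mu_k = \dim J^{(k)}E - s_k$ functionally independent local invariants $I_1,\ldots,I_{\mu_k}$; functional independence means $\dd I_1,\ldots,\dd I_{\mu_k}$ are pointwise linearly independent, i.e.\ $I = (I_1,\ldots,I_{\mu_k}) : J^{(k)}E \to \R^{\mu_k}$ is a submersion near a regular point $z_0$. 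Since each $I_j$ is orbit-constant, the $s_k$-dimensional leaves sit inside the $s_k$-dimensional fibres of $I$, hence each leaf is open in its fibre; shrinking to a connected neighbourhood of $z_0$, the fibres of $I$ are precisely the orbit pieces. Thus $I$ is a local quotient map for the prolonged action and $\R^{\mu_k}$ is a local model for its orbit space — equivalently, in suitable coordinates $J^{(k)}E \cong \R^{s_k}\times\R^{\mu_k}$ the orbits are $\R^{s_k}\times\{w\}$ and $I$ is projection onto the second factor.

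Finally I would conclude. Being a union of $G^{(k)}$-orbits, $\calS_\Delta$ is saturated for $I$, so locally $\calS_\Delta = I^{-1}(Z)$ with $Z = I(\calS_\Delta) \subset \R^{\mu_k}$; since $I$ is a submersion and $\calS_\Delta$ has codimension $n$, $Z$ is a codimension-$n$ submanifold of $\R^{\mu_k}$, hence near $I(z_0)$ it is the zero set $\{w : F(w) = 0\}$ of some submersion $F : \R^{\mu_k} \to \R^n$. Therefore $\calS_\Delta = \{F(I_1,\ldots,I_{\mu_k}) = 0\}$ locally, i.e.\ the ODE is equivalent to $F(I_1,\ldots,I_{\mu_k}) = 0$; conversely any such equation admits $G$ as a symmetry group, since $F(I_1,\ldots,I_{\mu_k})$ is $G^{(k)}$-invariant and hence its zero locus is $G^{(k)}$-invariant (here ``arbitrary $F$'' is understood up to the mild rank condition needed for $\{F\circ I = 0\}$ to actually be a $k$-th order ODE). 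I expect the main obstacle to be the middle step: verifying involutivity and constant rank so that Frobenius legitimately applies, and, more importantly, pinning down the genericity hypotheses — a regular point of the action, maximal rank and local solvability of $\Delta$ — that are needed to make ``symmetry group $\iff$ invariant variety'' an equivalence and that force the conclusion to be merely local.
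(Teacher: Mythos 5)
The paper does not prove this statement---it is imported verbatim from Olver (Proposition~2.56), so there is no in-paper proof to diverge from. Your argument is a correct reconstruction of Olver's standard proof: identify symmetry of $\Delta$ with $G^{(k)}$-invariance of the solution variety in $J^{(k)}E$ (under the maximal-rank and local-solvability hypotheses, automatic here since $\Delta$ is in solved form), use semi-regularity, involutivity of the prolonged generators and \cref{lemma:invariants-dim} to realise $(I_1,\ldots,I_{\mu_k})$ as a local quotient map for the prolonged action, and write the saturated solution variety as $\{F\circ I = 0\}$; the caveats you flag (working near a regular point, the residual rank condition on $F$) are exactly the ones the cited statement tacitly carries.
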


\begin{remark}
In the above construction, we focus exclusively on the case where the space of independent variables, $T$, is isomorphic to $\mathbb{R}$. The general setting -- which includes the theory for partial differential equations (PDEs) -- allows $T$ to be of arbitrary dimension. However, since our framework is concerned with ODEs, including such a general setting would be redundant and require cumbersome notation. See, e.g.,~\cite{Olver1993,Olver1995} for a more comprehensive overview.
\end{remark}

\section{Geometric framework}\label{sec:geom_framework}
In this section, our aim is to establish the appropriate geometric framework for describing manifold NODEs on a smooth manifold $M$ of dimension $\mathrm{dim} \, M = n$, and in particular their equivariance under a group $G$ of symmetry transformations acting on $M$. Throughout, we will assume that $M$ is connected and that $G$ is a connected Lie group whose action on $M$ is semi-regular.

In~\cref{sec:manifold-neural-odes}, we give a precise definition of NODEs and their equivariance. In~\cref{sec:eq-on-nodes}, we generalise previous results on the equivalence between an NODE, its generating vector field and the corresponding flow on $M$. Finally,~\cref{sec:diff-inv-nodes} introduces the notion of differential invariants in the context of NODEs and how they can be used to parametrise an NODE based on~\cref{thm:general-sym-ODE}.

\subsection{Manifold neural ODEs}\label{sec:manifold-neural-odes}
Manifold neural ODEs are neural network models defined by a vector field describing how the data evolves continuously over time governed by an ordinary differential equation. The word \textit{manifold} emphasises that the data can be defined on non-Euclidean manifolds. We define manifold neural ODEs in the following way:

\begin{definition}[Manifold neural ODE]\label{dfn:manifold_neural_odes}
Given a point $p\in M$ and a learnable vector field $\phi:M\to TM$, let $u:\mathbb{R}\to M$ be the unique curve solving the Cauchy problem
\begin{equation*}
    \dot{u}(t) = \phi_{u(t)} \,, \quad u(0) = p \,.
\end{equation*}
A \textit{manifold neural ODE} on $M$ is the diffeomorphism $h:M\to M$ defined by $h:u(0)\mapsto u(1)$. The point $p\in M$ is referred to as the input and $h(p)\in M$ as the output.
\end{definition}
\begin{remark}
    The map $h:M\to M$ can, equivalently, be defined by $h:\Phi(0,p)\mapsto\Phi(1,p)$, where $\Phi$ is the flow on $M$ generated by a learnable vector field $\phi:M\to TM$.
\end{remark}

A neural network is said to be equivariant with respect to a group $G$ if applying the group action to the input before applying the network gives the same result as applying the group action to the output after applying the network.~\Cref{dfn:manifold_neural_odes} allows us to define equivariance of a manifold neural ODE in a concise way:

\begin{definition}[Equivariant manifold neural ODE]
   A manifold neural ODE $h:M\to M$ is said to be $G$-equivariant if $h\circ L_g=L_g\circ h$ holds for all $g\in G$.
\end{definition}

\subsection{Equivariance of neural ODEs}\label{sec:eq-on-nodes}
In this section, we introduce~\cref{thm:main-equiv-NODE} which demonstrates the equivalence between equivariance of a vector field $\phi$, its generated flow $\Phi$ and the corresponding diffeomorphism $h$ mapping the input to the output. The implication (i) to (iii) in~\cref{thm:main-equiv-NODE} below was proven for $M=\R^n$ in~\cite{Kohler2020} and the equivalence between (i) and (ii) was established for Riemannian manifolds in~\cite{Katsman2021}. Our result further extends these connections by establishing the equivalence between equivariance of the vector field defining the Cauchy problem, the flow that it generates, and the diffeomorphism $h: M \to M$ defining the corresponding NODE. Furthermore, our construction works for any connected manifold without the requirement of a metric structure on $M$.
The proof in~\cite{Katsman2021}, in fact, makes no explicit use of the Riemannian metric and establishes the equivalence of (i) and (ii) also in our more general setting. We include this as part of the proof for completeness.

\begin{theorem}\label{thm:main-equiv-NODE}
Let $\phi$ be a smooth vector field on the manifold $M$, $\Phi : \mathbb{R} \times M \to M$ be the flow generated by $\phi$ and $h : M \to M$ be the diffeomorphism defined by $p=u(0)$, $h(p) = u(1)$, where $u(t)$, $t \in [0,1]$, is the unique solution to the Cauchy problem
\begin{equation}
\label{eqn:manifold-NODE}
    \dot{u}(t) = \phi_{u(t)} \,, \quad u(0) = p \,.
\end{equation}
Then the following are equivalent:
\begin{itemize}
    \item[(i)] $\phi$ is a $G$-equivariant vector field,
    \item[(ii)] $\Phi$ is a $G$-equivariant flow,
    \item[(iii)] $h$ is a $G$-equivariant diffeomorphism.
\end{itemize}
\end{theorem}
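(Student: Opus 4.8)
The plan is to prove the cycle of implications $(i) \Rightarrow (ii) \Rightarrow (iii) \Rightarrow (i)$, which establishes the equivalence of all three. The bridge between the vector field and the flow is the uniqueness of integral curves guaranteed by Picard--Lindel\"of, and the bridge between the flow and the diffeomorphism $h$ is simply that $h$ is the time-$1$ map $h(p) = \Phi(1,p)$, so that part is essentially a specialisation.

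For $(i) \Rightarrow (ii)$: assume $\phi_{L_g p} = (L_g)_* \phi_p$ for all $p$ and $g$. Fix $g \in G$ and $p \in M$, and consider the two curves $t \mapsto L_g \Phi(t,p)$ and $t \mapsto \Phi(t, L_g p)$. Both pass through $L_g p$ at $t=0$. I would differentiate the first: $\frac{d}{dt} L_g \Phi(t,p) = (L_g)_* \left( \frac{d}{dt}\Phi(t,p) \right) = (L_g)_* \phi_{\Phi(t,p)} = \phi_{L_g \Phi(t,p)}$, where the last step uses equivariance of $\phi$ at the point $\Phi(t,p)$. Hence $t \mapsto L_g\Phi(t,p)$ is an integral curve of $\phi$ with initial condition $L_g p$; by uniqueness it equals $\Phi(t, L_g p)$. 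This gives Definition~\ref{dfn:flow_equivariance}.

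For $(ii) \Rightarrow (iii)$: this is immediate by evaluating the flow equivariance identity $L_g \Phi(t,p) = \Phi(t, L_g p)$ at $t = 1$, since $h(p) = \Phi(1,p)$ and $h(L_g p) = \Phi(1, L_g p)$, yielding $L_g \circ h = h \circ L_g$. For $(iii) \Rightarrow (i)$: this is the step I expect to be the main obstacle, because recovering an infinitesimal (vector field) statement from a time-$1$ statement is not automatic --- the time-$1$ map does not in general determine the generator. The resolution is that $h$ is not merely one diffeomorphism but the time-$1$ map of a known flow, and one can reparameterise: for fixed $s \in \mathbb{R}$, rescaling $\phi \mapsto s\phi$ replaces $\Phi(t,\cdot)$ by $\Phi(st,\cdot)$, and the corresponding time-$1$ diffeomorphism is $\Phi(s,\cdot)$. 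Since rescaling a vector field preserves its equivariance, the hypothesis $(iii)$ applied to every rescaled NODE shows $L_g \Phi(s,p) = \Phi(s, L_g p)$ for all $s$, i.e.\ we recover $(ii)$. (Alternatively, one argues directly: $(iii)$ applied on subintervals $[0,s]$, using that the restriction of the Cauchy problem to $[0,s]$ has the same vector field, gives $\Phi(s,\cdot)$ equivariant for all $s \ge 0$, and for $s<0$ one uses the flow property $\Phi(-s,\cdot) = \Phi(s,\cdot)^{-1}$ together with the fact that the inverse of an equivariant diffeomorphism is equivariant.) Once $(ii)$ holds for all $s$, differentiating $L_g \Phi(s,p) = \Phi(s,L_g p)$ at $s=0$ and using $(L_g)_*$ is continuous (smooth) gives $(L_g)_* \phi_p = \phi_{L_g p}$, which is $(i)$. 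Thus the crux is the observation that the NODE construction is stable under time-rescaling of $\phi$, which upgrades the single time-$1$ identity in $(iii)$ to the full one-parameter family and closes the loop.
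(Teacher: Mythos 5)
Your implications $(i)\Rightarrow(ii)$ and $(ii)\Rightarrow(iii)$ are correct, and the first is in fact slightly more careful than the paper's version: you verify that $t\mapsto L_g\Phi(t,p)$ is an integral curve of $\phi$ along its whole length before invoking uniqueness, whereas the paper only compares tangents at $t=0$. You have also correctly identified $(iii)\Rightarrow(i)$ (equivalently, recovering $(ii)$ from $(iii)$) as the genuinely hard direction, for exactly the right reason: a time-$1$ map does not determine its generator.

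However, your resolution of that direction has a real gap. Hypothesis $(iii)$ asserts equivariance of the single diffeomorphism $h=\Phi(1,\cdot)$ and says nothing about the maps $\Phi(s,\cdot)$ for $s\neq 1$. Both of your proposed upgrades assume what is to be proved: ``applying $(iii)$ to the rescaled NODE with field $s\phi$'' requires knowing that the time-$1$ map of $s\phi$ --- which is precisely $\Phi(s,\cdot)$ --- is equivariant, and ``applying $(iii)$ on the subinterval $[0,s]$'' likewise requires equivariance of $\Phi(s,\cdot)$, which is not given. The obstruction is not merely formal: on $M=S^1$ with $G=\SO(2)$ acting by rotation, any positive non-constant vector field $v(\theta)\partial_\theta$ with $\int_0^{2\pi}v(\theta)^{-1}\,\dd\theta=1$ has $\Phi(1,\cdot)=\id$, which is trivially equivariant, while $\phi$ itself is not; so no argument using only $(iii)$ can deliver equivariance of all the $\Phi(s,\cdot)$. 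The paper takes a different, Lie-theoretic route for this direction: it writes $L_g=\exp(X_g)$ and $h=\exp(\phi)$, asserts that $\exp(\phi)\exp(X_g)=\exp(X_g)\exp(\phi)$ holds iff $[\phi,X_g]=0$, and then uses the standard fact that a vanishing bracket is equivalent to the two flows commuting for all times. (The same $S^1$ example puts pressure on the ``only if'' half of that assertion as well --- commutativity of two particular time-$1$ maps does not in general force the bracket to vanish --- so the delicate point you flagged is exactly where the paper's own argument carries its weight.) To close your proof you would need either an additional hypothesis letting the time-$1$ map determine the flow, or an argument of the paper's Lie-theoretic type; the time-rescaling device alone does not suffice.
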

\begin{proof}

We begin by establishing the equivalence between (i) and (ii). Assume that $\phi$ is $G$-equivariant. We want to show that $\Phi(t,L_g p)=L_g\Phi(t,p)$ for all $g \in G$, $p\in M$. By the properties of flows, we have $\Phi(0,L_g p)=L_g p=L_g\Phi(0,p)$. The equivariance of $\phi$ (\cref{dfn:vector_field_equivariance}) then implies
\begin{equation}
\label{eq:equiv_nde}
    \left.\frac{d}{dt}\Phi(t,L_g p)\right|_{t=0} = \phi_{L_g p} = (L_g)_*\phi_{p}\\ = \left.\frac{d}{dt}L_g \Phi(t,p)\right|_{t=0}
\end{equation}
for any point $p\in M$. Consequently, the tangent of $\Phi(t,L_g p)$ coincides with the tangent of $L_g\Phi(t,p)$ at every point in $M$, and the uniqueness of the integral curve then shows that $\Phi$ is $G$-equivariant. Conversely, if $\Phi$ is $G$-equivariant, a rearrangement of Equation~\eqref{eq:equiv_nde} shows that $\phi$ is $G$-equivariant. Thus, (i) and (ii) are equivalent.

We complete the proof by showing that (ii) and (iii) are equivalent. To show that (ii) implies (iii) is trivial, since it follows directly from the fact that $h(p)=\Phi(1,p)$. To show the converse, however, requires some more work.

Suppose that the diffeomorphism $h$ is $G$-equivariant. We know that the flow generated by $\phi$ is given by the exponentiation of $\phi$, i.e., $\Phi(t,p)=\exp{(t\phi)}p$. It follows that $h$ can be expressed in terms of the vector field $\phi$ as
\begin{equation*}
    h(p)=\exp(\phi)p.
\end{equation*}
Since the group $G$ is a Lie group, there is a vector field $X_g$ on $M$ generating the action of the group element $g\in G$ as $L_g p=\exp{(X_g)}p$. The map $\Psi:\mathbb{R}\times M\to M$ given by
\begin{equation*}
    \Psi(s,p)=\exp(sX_g)p
\end{equation*}
defines a flow on $M$. Equivariance of $h$, which can be expressed as $h \circ L_g = L_g \circ h$, implies that
\begin{equation*}
    \exp(\phi)\exp{(X_g)}=\exp(X_g)\exp{(\phi)}.
\end{equation*}
This is true if and only if the Lie bracket vanishes $\left[\phi,X_g\right]=0$, or equivalently, the Lie derivative of $\phi$ along the flow generated by $X_g$ does. A well-known result (see e.g.~\cite{Nakahara2003}) is that this holds if and only if $\Psi(s,\Phi(t,p))=\Phi(t,\Psi(s,p))$, from which it follows that $L_g\Phi(t,p)=\Phi(t,L_g p)$. This completes the proof.
\end{proof}

\Cref{thm:main-equiv-NODE} implies that the neural ODE defined by a $G$-equivariant vector field has a $G$-equivariant solution. An alternative formulation of this result, obtained directly from the connection between the flow $\Phi$ and the solution $u(t)$ to~\eqref{eqn:manifold-NODE}, is that $L_g u(t)$ is also a solution to~\eqref{eqn:manifold-NODE} for every $g \in G$. This is the definition of equivariance occurring in the classical theory of (continuous) symmetries of differential equations and naturally extends the notion of equivariance layer by layer in an equivariant neural network to the continuum limit since each point on the solution $u(t)$ is transformed by the action of the same group element $g \in G$.

\subsection{Differential invariants and equivariant NODEs}\label{sec:diff-inv-nodes}
In~\cref{sec:diff_invariants} we introduced the notion of differential invariants and saw how they can be useful when determining the most general ODE which is equivariant with respect to some group $G$. We will now show how this theory can be used to provide the most general parametrisation of a manifold NODE with symmetry group $G$. Combining~\cref{thm:main-equiv-NODE} and~\cref{thm:general-sym-ODE} entails the following theorem.

\begin{theorem}\label{thm:invariants-node}
Given a complete set of first-order differential invariants $I_1,\ldots,I_{\mu_1}$ of $G$, the most general $G$-equivariant NODE is (locally) on the form $H(I_1,\ldots,I_{\mu_1})=0$ where $H:\mathbb{R}^{\mu_1} \to \mathbb{R}^n$ is an arbitrary function.
\end{theorem}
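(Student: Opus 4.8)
The plan is to obtain \cref{thm:invariants-node} as a direct consequence of the two ingredients it is assembled from: the equivalence of equivariance notions in \cref{thm:main-equiv-NODE} and the classical normal form for symmetric ODEs in \cref{thm:general-sym-ODE}. First I would observe that, by \cref{dfn:manifold_neural_odes}, a manifold NODE $h$ is completely determined by its generating vector field $\phi$ through the first-order Cauchy problem $\dot{u}(t) = \phi_{u(t)}$, and that conversely every smooth $\phi$ produces such an $h$ by the Picard-Lindel{\"o}f theorem; hence $\phi \mapsto h$ is a bijection between smooth vector fields on $M$ and manifold NODEs on $M$. This reduces the task of describing the \emph{most general $G$-equivariant NODE} to that of describing the most general $G$-equivariant vector field $\phi$ on $M$.

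Next I would apply \cref{thm:main-equiv-NODE}, which tells us that $h$ is $G$-equivariant precisely when $\phi$ is $G$-equivariant; as spelled out in the remark following that theorem, this is exactly the classical statement that $G$ is a symmetry group of the associated first-order equation $\Delta(t,u,u^{(1)}) = u^{(1)} - \phi = 0$, in the sense that $L_g u(t)$ solves $\Delta = 0$ whenever $u(t)$ does. With this identification the situation is precisely that of \cref{thm:general-sym-ODE} applied to the total space $E = T \times M$ at order $k = 1$. The standing assumptions of \cref{sec:geom_framework} -- $G$ connected and acting semi-regularly on $M$ -- together with the fact that the action on the independent variable $T$ is trivial, guarantee that $G$ acts semi-regularly on $E$; hence \cref{thm:general-sym-ODE} yields, for a complete set $I_1,\dots,I_{\mu_1}$ of functionally independent first-order differential invariants (whose number $\mu_1 = \dim J^{(1)}E - s_1$ is supplied by \cref{lemma:invariants-dim}), the local form $F(I_1,\dots,I_{\mu_1}) = 0$ with $F : \mathbb{R}^{\mu_1} \to \mathbb{R}^n$ arbitrary. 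Relabelling $F$ as $H$ gives the claim.

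The routine checks I would still carry out are that the invariants entering here are genuinely the order-one invariants on $J^{(1)}E$ -- matching the order of the NODE's defining equation -- and that restricting the $G$-action to the dependent variables $M$ (as is done throughout the paper, so that $t$ governs only the internal dynamics of the model) is compatible with the prolongation machinery behind \cref{thm:general-sym-ODE}, with $L_g^{(1)}$ and its generators restricted to $M^{(1)}$. The one genuinely delicate point, and where I expect the argument to need the most care, is the bridge between the two theorems: one must be sure that ``$h$ is a $G$-equivariant NODE'' and ``$G$ is a symmetry group of $\dot{u} = \phi$ in the Lie sense'' denote literally the same condition rather than merely analogous ones -- which is exactly the content of \cref{thm:main-equiv-NODE} and its accompanying remark, identifying layerwise equivariance of the flow map with the action of $G$ permuting the solution set. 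A minor point worth a sentence is the autonomy of $\phi$: \cref{dfn:manifold_neural_odes} takes $\phi = \phi(u)$, the autonomous restriction of a general first-order $\phi = \phi(t,u)$, and the normal form $H(I_1,\dots,I_{\mu_1}) = 0$ accommodates both cases since the invariants live on $J^{(1)}E = T \times M^{(1)}$ regardless.
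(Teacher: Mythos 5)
Your proposal is correct and follows exactly the paper's route: the paper derives this theorem simply by combining \cref{thm:main-equiv-NODE} (equivariance of the NODE is equivalent to $G$ being a Lie symmetry group of the Cauchy problem, as made explicit in the remark following that theorem) with \cref{thm:general-sym-ODE} at order $k=1$. Your additional checks on semi-regularity, the restriction of the action to the dependent variables, and autonomy are sensible elaborations of the same argument rather than a different approach.
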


In practice, it is often convenient to rewrite $H(I_1,\ldots,I_{\mu_1})=0$ on the equivalent form~\eqref{eqn:manifold-NODE} to obtain the most general form of the equivariant vector field. In this way the differential invariants of $G$ can be used to parameterise the equivariant vector fields on $M$, generalising the construction in~\cite{Katsman2021} and incorporating it into the geometric framework. We emphasise that there is no additional structural conditions on the function; equivariance is obtained through the use of the differential invariants.

To demonstrate the framework based on differential invariants we will now consider its application to two examples where $M$ is, respectively, Euclidean and non-Euclidean.
\begin{example}\label{example:rot_eq_NODE}
Let $M = \mathbb{R}^2\setminus\left\{0\right\}$ and $G = \SO(2)$, and let $G$ act on $M$ by rotations in the usual way, i.e.,
\begin{equation*}
    L_\epsilon [x,y]=[x\,\cos{\epsilon}-y\,\sin{\epsilon},x\,\sin{\epsilon}+y\,\cos{\epsilon}],
\end{equation*}
where $u(t) = [x(t),y(t)]$ and $\epsilon$ is the arbitrary angle of rotation. The fix-point at the origin in $\mathbb{R}^2$ is excluded to obtain a semi-regular action of $\SO(2)$. The induced vector field $X$ generating $G$ is given by $X=-y\partial_x+x\partial_y$, and its first-order prolongation $X^{(1)}$ given by 
\begin{equation*}
    X^{(1)}=-y\partial_x+x\partial_y-\dot{y}\partial_{\dot{x}}+\dot{x}\partial_{\dot{y}}.
\end{equation*}
By~\cref{lemma:invariants-dim}, there are four functionally independent first-order differential invariants, $I:J^{(1)}E\rightarrow\R$, satisfying $X^{(1)}(I)=0$. By solving the corresponding characteristic system, they are found to be
\begin{gather*}
    I_1=t \,,\quad I_2 = r = \sqrt{x^2+y^2}\,,\\ I_3=r\dot{r}=x\dot{x}+y\dot{y} \,,\quad  I_4=r^2\dot{\theta} = x\dot{y}-y\dot{x},
\end{gather*}
where $(r,\theta)$ are polar coordinates on $M$. By~\cref{thm:general-sym-ODE}, the most general first-order ODE for which $G$ is a symmetry group is of the form $H(t,r,r\dot{r},r^2\dot{\theta})=0\in\mathbb{R}^2$, which is equivalent to 
\begin{equation}\label{eq:rot_eq}
\begin{cases}
    \dot{x}=A(t,r)x-B(t,r)y\\
    \dot{y}=B(t,r)x+A(t,r)y
\end{cases}
,
\end{equation}
where $A$ and $B$ are two arbitrary functions. Thus, rotationally equivariant diffeomorphisms of $M$ are obtained from the NODE model by parameterising the space of equivariant vector fields in terms of the functions $A$ and $B$ in~\eqref{eq:rot_eq} and using neural networks to learn approximations of $A$ and $B$. This generalises the constructions appearing in~\cite{Chen2018,Lou2020,Katsman2021}. Again, we emphasise that the neural networks are not required to be equivariant with respect to the symmetry group $G$; equivariance of the NODE model is obtained through the use of differential invariants to derive the system of ODEs in~\eqref{eq:rot_eq}.

To illustrate how the above construction can be used in practice, we consider a rotationally equivariant diffeomorphism on the plane $\mathbb{R}^2$ that can be learned using an NODE of the form~\eqref{eq:rot_eq}. Let $h:\mathbb{R}^2\to \mathbb{R}^2$ be a unit translation in the radial direction, i.e.,
\begin{equation*}
    h(x,y)=\left(x+\frac{x}{r},y+\frac{y}{r}\right)\,.
\end{equation*}
 for $x,y\in\mathbb{R}$ and $r=\sqrt{x^2+y^2}$. Two functions $x$ and $y$ with the property $(x(1),y(1))=h(x(0),y(0))$ are $x(t)=x(0)+t\,{x(0)}/{r}$ and $y(t)=y(0)+t\,{y(0)}/{r}$. They have derivatives $\dot{x}(t)={x(0)}/{r}$ and $\dot{y}(t)={y(0)}/{r}$, respectively. Consequently, in~\eqref{eq:rot_eq} we have $A(t,r)=1/r$ and $B(t,r)=0$.

In this example, we let the data set consist of $64$ equally spaced points on $[-2,2]\times[-2,2]$. The functions $A$ and $B$ are modelled using feed-forward neural networks and~\eqref{eq:rot_eq} is defined and trained using the {\tt torchdiffeq} package~\cite{Chen2018}. The functions $A$ and $B$ in~\eqref{eq:rot_eq} are plotted as functions of $r$ in~\cref{fig:example1} after 300 epochs of training. From the definition of $h$, we expect $A(r)$ to approach a function $a(r)=1/r$ and $B(r)$ to approach a function $b(r)=0$ during training, which can be observed to be the case in the~\cref{fig:example1}.

\begin{figure}[!ht]
\centering
\includegraphics[scale=1]{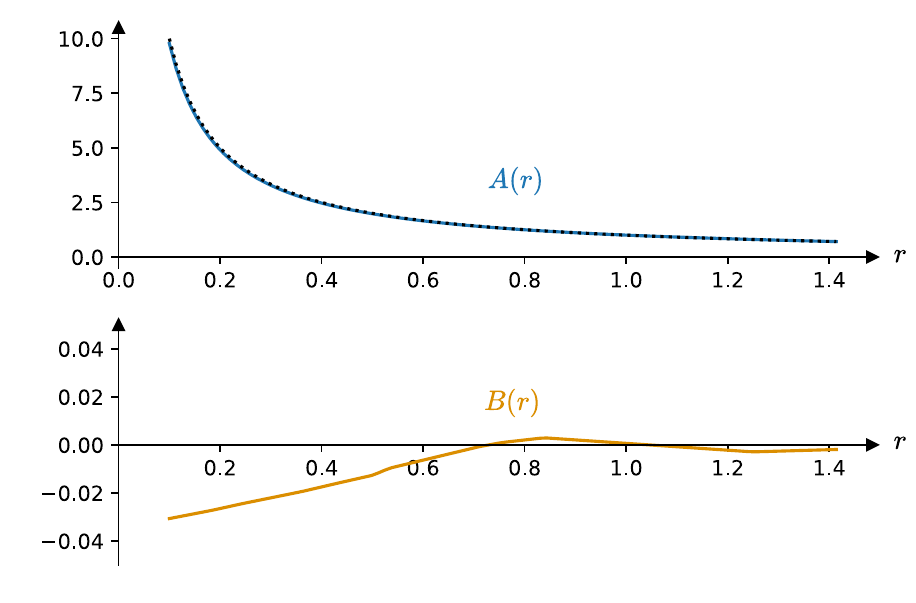}
\caption{The figure shows $A(r)$ (blue line) and $B(r)$ (orange line) in~\cref{example:rot_eq_NODE} as functions of $r$ after $300$ epochs. The dashed line in the top figure illustrates the function $a(r)=1/r$.}
\label{fig:example1}
\end{figure}

\end{example}

\begin{remark}
    In our framework, $I = t$ is always an invariant since we consider the case where $G$ acts trivially on $T$. In our examples, we consider autonomous NODE systems, in which case the functions $A$ and $B$ are independent of $t$.
\end{remark}
\begin{remark}
    In the example above, rotational equivariance is conveniently expressed in terms of the polar coordinates $(r,\theta)$. For any one-dimensional symmetry group $G$ we can similarly find canonical coordinates on $M$ defined by $G$ acting by translation in one coordinate and trivially in all others. For higher-dimensional symmetry groups, however, this is not generally possible.   
\end{remark}

\begin{example}\label{example:so2_sphere}
We now consider a non-Euclidean example where $G=\SO(2)$ acts on the sphere $M = S^2$ through rotation around the $z$-axis. If $\theta$ is the polar angle and $\varphi$ is the azimuthal angle, the action of $\SO(2)$ can, in spherical coordinates, be given by
\begin{equation*}
    L_{\epsilon} [\theta,\varphi]=[\theta,\varphi+\epsilon],
\end{equation*}
where $\epsilon$ is the rotation angle. The action is generated by the vector field $X$ and its prolongation $X^{(1)}$ given by
\begin{equation*}
    X^{(1)}=X=\partial_\varphi.
\end{equation*}
Since $J^{(1)}S^2$ is five-dimensional and $\SO(2)$ has dimension one, there are four first-order functionally independent differential invariants according to~\cref{lemma:invariants-dim} which are readily computed as
    \begin{equation*}
    I_1=t,\ \ I_2=\theta,\ \ I_3=\dot{\theta}, \ \ \mathrm{and} \ \ I_4 = \dot{\varphi}.
\end{equation*}
~\Cref{thm:invariants-node} implies that the most general ODE is given by
\begin{equation}\label{eq:ex_sphere}
    \begin{cases}
        \dot{\theta}&=A(t,\theta)\\
        \dot{\varphi}&=B(t,\theta)
    \end{cases},
\end{equation}
where $A$ and $B$ are arbitrary functions.

A rotationally equivariant map on $S^2$ can be obtained through scaling of the polar angle and shifting of the azimuthal angle, i.e.,
\begin{equation*}
    h:(\theta,\varphi)\mapsto(\theta e^\epsilon,\varphi+\nu)
\end{equation*}
for some real numbers $\epsilon$ and $\nu$. Two functions $\theta$ and $\varphi$ that satisfy $(\theta(1),\varphi(1))=h(\theta(0),\varphi(0))$ are $\theta(t)=\theta(0)\,e^{t{\epsilon}}$ and $\varphi(t)=\varphi(0)+t\,\nu$. In~\eqref{eq:ex_sphere}, this corresponds to $A(t,\theta)=\theta\epsilon$ and $B(t,\theta)=\nu$. 

In this example, we consider $\epsilon=1$ and $\nu=0.05$, giving the derivatives $\dot{\theta}(t)=\theta$ and $\dot{\varphi}(t)=0.05$. We define an NODE of the form~\eqref{eq:ex_sphere} and train it for a set of 400 randomly distributed points on the sphere being transformed by the map $h$. The training is performed in the same way as in~\cref{example:rot_eq_NODE}. In~\cref{fig:example-on-sphere}, we have plotted the resulting $A$ and $B$ as functions of $\theta$ after 400 epochs of training. We see that $A(\theta)$ is approximately equal to a function $a(\theta)=\theta$ and $B(\theta)$ is approximately equal to a function $b(\theta)=0.05$, as expected.

\begin{figure}[!ht]
\centering
\includegraphics[scale=1]{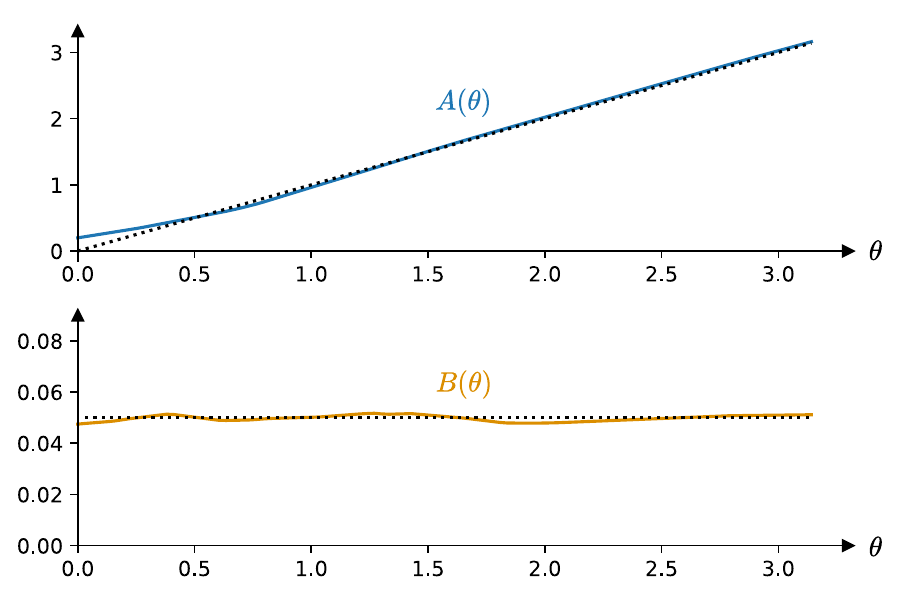}
\caption{$A$ and $B$ in~\cref{example:so2_sphere} as functions of $\theta$ after 400 epochs of training. The dashed lines represent the functions $a(\theta)=\theta$ and $b(\theta)=0.05$, respectively.}
\label{fig:example-on-sphere}
\end{figure}
\end{example}

We have now seen two examples where equivariant manifold NODEs can be trained to approximate equivariant diffeomorphisms $h : M \to M$. Next, we turn to the question of whether this is possible for any diffeomorphism on any manifold.

\section{Augmentation and universality}\label{sec:aug-eq-ndes}
A well-known issue with neural ODEs is their inability to learn certain classes of diffeomorphisms. Because the trajectories of the ODE solutions can not intersect, not all diffeomorphisms $h:M \to M$ can be obtained from integral curves on $M$. The canonical example in the Euclidean setting $M=\mathbb{R}^n$ is the diffeomorphism $h(x) = -x$, which cannot be represented by an NODE. The resolution of this issue for the case $M=\mathbb{R}^n$ was discussed in~\cite{Dupont2019}, where augmented neural ODEs were introduced, and in~\cite{Zhang2020}, where the authors prove that NODEs on the augmented space $\mathbb{R}^{2n}$ are universal approximators of diffeomorphisms $h : \mathbb{R}^n \to \mathbb{R}^n$. Heuristically, augmenting the state space of the ODE amounts to introducing enough extra dimensions to resolve intersections of the integral curves. 

Augmentation of NODEs is equivalent to the problem of embedding the diffeomorphisms $h : M \to M$ in a flow, i.e., the integral curve of an ODE on some ambient space. In~\cite{Utz1981} it was shown that any $h$ can be embedded in a flow on an ambient twisted cylinder of dimension $n+1$ obtained as a fibration over $M$. However, the properties of the ambient space, e.g., the topological class of the fibration, depend on the diffeomorphism. As a consequence, in the setting where the objective is learning $h$, we must instead consider an embedding into an ambient space that is common for all $h$.

We first consider augmentation of manifold NODEs for the non-equivariant case, corresponding to a trivial symmetry group $G$, to establish universality of manifold NODEs on connected $M$. We then show that the construction is equivariant with respect to the action of $G$ which implies that equivariant manifold NODEs are universal approximators of equivariant diffeomorphisms $h:M \to M$. The geometric perspective offered by our framework is essential to the construction and its subsequent generalisation to include non-trivial symmetry groups $G$.

\subsection{Augmented manifold NODEs}
A natural candidate for an ambient space that allows us to resolve intersections of integral curves in the case of manifold NODEs is the tangent bundle $TM$ of the manifold $M$. The following theorem shows that it is possible to embed any diffeomorphism on $M$ into a flow on the tangent bundle if $M$ is connected, and provides the theoretical basis for augmentation of manifold NODEs. 

\begin{theorem}\label{thm:embed-flow}
Let $M$ be a connected, smooth manifold and $h : M \to M$ be a diffeomorphism. Then $h$ can be embedded in a flow on $TM$.
\end{theorem}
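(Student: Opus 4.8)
The plan is to realise $h$ as the time-one map of a flow on $TM$ by first lifting $h$ to a diffeomorphism $\tilde h$ of $TM$, and then invoking an embedding result for diffeomorphisms in flows on a suitable ambient space. The natural lift is the differential (push-forward) $\tilde h := dh : TM \to TM$, which covers $h$ via the bundle projection $\pi : TM \to M$ (so $\pi \circ \tilde h = h \circ \pi$) and is a diffeomorphism of $TM$ since $h$ is. The key structural gain is dimension: $\dim TM = 2n$, which should be enough slack to resolve the obstruction that prevented $h$ itself from being a flow map on $M$. Concretely, the canonical example $h(x) = -x$ on $M = \mathbb{R}^n$ fails on $M$ because it reverses orientation in odd dimensions and has no square root among flow maps, but its differential $dh = -\mathrm{id}$ on $T\mathbb{R}^n \cong \mathbb{R}^{2n}$ preserves orientation and is isotopic to the identity, hence embeddable in a flow.

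First I would recall the relevant embedding criterion: by the result of Utz~\cite{Utz1981} (already cited in the excerpt), any diffeomorphism of a connected manifold $N$ can be embedded in a flow on an ambient twisted cylinder over $N$ of dimension $\dim N + 1$; the key point in our situation is that we want a \emph{fixed} ambient space, independent of $h$, and $TM$ is the right candidate because passing from $M$ to $TM$ already absorbs the topological twisting into the (trivialisable-along-the-fibre) bundle structure. So the second step is to argue that the twisted-cylinder construction applied to $\tilde h$ on $TM$ can be untwisted, i.e. the mapping torus of $\tilde h$ is diffeomorphic to $TM \times \mathbb{R}$ (or at least that $\tilde h$ is isotopic to the identity through bundle maps), so that the flow on the mapping torus descends to a flow on $TM$ itself with $h$ recovered by restricting to the zero section at time one. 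For this I would use that $\tilde h = dh$ is fibrewise linear and that $GL(n,\mathbb{R})$ deformation-retracts onto $O(n)$, together with connectedness of $M$, to build an explicit isotopy from $dh$ to a map that is the identity on fibres — reducing the problem on $TM$ to the base and to a controlled family of linear maps.

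The third step is to assemble the flow: given an isotopy $\tilde h_s$, $s \in [0,1]$, from $\tilde h_0 = \tilde h$ to $\tilde h_1 = \mathrm{id}_{TM}$ (or to something manifestly a flow map), one reparametrises and glues to produce a smooth one-parameter group, using a standard suspension/mapping-torus argument: the time-dependent vector field generating $\tilde h_s$ can be made autonomous on the enlarged space, and since the enlarged space is $TM$ itself (after untwisting), this is the desired flow $\Psi$ on $TM$ with $\Psi(1,\cdot)|_{\text{zero section}} = \tilde h|_{\text{zero section}}$, which under the identification of the zero section with $M$ is exactly $h$. I would phrase the conclusion in terms of~\cref{dfn:manifold_neural_odes}: the restriction of $\Psi$ to the zero section $M \hookrightarrow TM$ realises $h$ as the input-output map of a (lifted) NODE.

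The main obstacle I expect is the untwisting step: showing that the mapping torus of $dh$ over $TM$ is the trivial bundle, equivalently that $dh$ lies in the identity component of $\mathrm{Diff}(TM)$ in a way compatible with the bundle structure. Merely knowing $h$ is a diffeomorphism does not make $h$ isotopic to $\mathrm{id}_M$ — e.g. orientation-reversing $h$ are not — so the entire weight of the argument rests on the passage to $TM$ genuinely killing this obstruction. The cleanest route is probably to avoid isotopy of $h$ altogether and instead work directly with Utz's twisted cylinder: embed $\tilde h$ in a flow on the twisted cylinder $C_{\tilde h}$ over $TM$, and then exhibit a diffeomorphism $C_{\tilde h} \cong TM$ by trading the twist in the cylinder for a shear in the fibre directions of $TM$ — this is where the doubled dimension is spent, and where the detailed, slightly technical construction lives. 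I would make sure the final ambient space $TM$ is the same for every $h$, which is the property the subsequent universality corollary~\cref{cor:node-univ-approx} needs.
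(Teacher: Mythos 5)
Your strategy has a genuine gap, and it sits exactly where you suspected it might: the ``untwisting'' step is not a technicality to be supplied later, it is false in general. The lift $dh$ covers $h$, and since the projection $\pi : TM \to M$ is a homotopy equivalence, $dh$ induces on $\pi_1(TM) \cong \pi_1(M)$ exactly the automorphism that $h$ induces on $\pi_1(M)$. So for any $h$ acting nontrivially on the fundamental group --- a hyperbolic automorphism of $T^2$, or the reflection $\varphi \mapsto -\varphi$ of $S^1$ --- the map $dh$ is not even homotopic to $\mathrm{id}_{TM}$, hence not isotopic to it, its mapping torus is a nontrivial bundle over $S^1$, and it cannot be the time-one map of any flow on $TM$ (time-one maps of flows are isotopic to the identity through the flow itself). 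Passing from $M$ to $TM$ kills the orientation obstruction in your $h(x)=-x$ example, but it cannot kill a $\pi_1$ obstruction, which lives on the base and is invisible to the fibres. The fibrewise argument ($GL(n,\R)$ retracting onto $O(n)$ plus connectedness of $M$) can only normalise the fibre part of $dh$; whatever remains still covers $h$, so you are back where you started. The same obstruction defeats your weaker goal of a flow $\Psi$ on $TM$ whose time-one map preserves the zero section $\iota(M)$ and restricts to $h$ there: $\pi \circ \Psi_t \circ \iota$ would then be a homotopy from $\mathrm{id}_M$ to $h$. Two further problems: the proposed identification $C_{\tilde h} \cong TM$ is dimensionally impossible ($\dim C_{\tilde h} = 2n+1 \neq 2n$); and even if $dh$ were isotopic to the identity, that alone would not make it a flow map, since the image of the exponential map is a proper subset of the identity component of $\mathrm{Diff}(TM)$ --- the suspension trick autonomises a time-dependent vector field only on the enlarged mapping-torus space, not on $TM$ itself.

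The paper's proof is organised precisely to avoid this circle of ideas. It never lifts $h$ to a bundle map and never asks the flow's time-one map to restrict to $h$ on a fixed copy of $M$ inside $TM$. Instead, for each $p$ it chooses a path $\gamma_p$ from $p$ to $h(p)$, lifts it to the curve $\Gamma_p = (\gamma_p,\dot{\gamma}_p)$ in $TM$ as in~\eqref{eq:gamma-lift}, perturbs the family so that distinct paths meet only with distinct velocities (making the lifts disjoint in $TM$), and takes the tangents of the lifts as the generating vector field; $h$ is then recovered via~\eqref{eq:h-embedding} from an initial condition $(p,\dot{\gamma}_p(0))$ that depends on $p$ and on $h$ and is in general not on the zero section. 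Decoupling the initial condition from a fixed section is exactly what lets the construction attempt to sidestep the homotopy obstruction that sinks your approach. (Whether that decoupling can be made smooth in $p$ is a subtlety of the paper's own argument, but it is orthogonal to the gap in yours.)
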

\begin{proof}
Because $M$ connected it is also path-connected, so for every $p \in M$ there exists a path $\gamma_p : [0,1] \to M$ such that $\gamma_p(0)=p$, $\gamma_p(1) = h(p)$. Let $\Gamma_p$ be the lift of $\gamma_p$ to $TM$, defined by
\begin{equation}
\label{eq:gamma-lift}
    \Gamma_p(t) = \left[ \gamma_p(t), \frac{d}{dt} \gamma_p(t) \right] \,.
\end{equation}
The obstruction to embedding $\gamma_p$ into a flow on $M$ are intersections between the paths $\gamma_p$, i.e., points where $\gamma_p(\tau)=\gamma_{p'}(\tau')$ for some $\tau,\tau' \in [0,1]$. The paths $\gamma_p$ and $\gamma_{p'}$ can always be continuously deformed to intersect only in isolated points $\tau,\tau' \in [0,1]$ so that
\begin{equation*}
    \frac{d}{dt}\left.\gamma_p(t)\right|_{t=\tau} \neq \frac{d}{dt}\left.\gamma_{p'}(t)\right|_{t=\tau'}. \,
\end{equation*}
The corresponding lifts $\Gamma_p$ and $\Gamma_{p'}$, by construction, never intersect in $TM$ even where $\gamma_p$ and $\gamma_{p'}$ intersect in $M$. Consequently, the gradients of $\Gamma_p$ define a vector field on $TM$ and a corresponding flow $\Phi : \mathbb{R} \times TM \to TM$ into which $h$ is embedded as
\begin{equation}
\label{eq:h-embedding}
    h(p) = \pi \Phi\left(1,p,\left.\frac{d}{dt}\gamma_p(t)\right|_{t=0}\right) \,,
\end{equation}
where $\pi:TM \to M$ is the tangent bundle projection.
\end{proof}

\cref{thm:embed-flow} shows that it is possible to augment manifold NODEs by embedding the diffeomorphism $h$ into a flow on $TM$. The augmented NODE model corresponding to this embedding is the Cauchy problem describing the evolution of a state $U : \mathbb{R} \to TM, \, U(t) = \left[u(t),\dot{u}(t)\right]$, generated by a vector field $\phi = [\chi,\psi]$ on $TM$ according to
\begin{equation}
\label{eq:augmented-NODE}
    \dot{U}(t) = \phi_{U(t)} \,, \quad U(0) = [u(0), \dot{u}(0)]\,,
\end{equation}
where we require that $\chi(u,\dot{u}) = \dot{u}$, so that $\phi(u,\dot{u}) = [\dot{u},\psi(u,\dot{u})]$, in order for $\phi$ to be consistent with tangent vectors to a lift $\dot{U}(t) = [\dot{u}(t),\ddot{u}(t)]$. The diffeomorphism $h: M \to M$ described by the augmented NODE model is obtained by $p=u(0)$, $h(p)=u(1)=\pi U(1)$, and we may take $\dot{u}(0)=0$ as a convenient initial condition for $\dot{u}$. By~\cref{thm:embed-flow}, the resulting augmented manifold NODE can approximate any diffeomorphism on $M$.
\begin{corollary}\label{cor:node-univ-approx}
Augmented manifold NODEs are universal approximators of diffeomorphisms $h : M \to M$.
\end{corollary}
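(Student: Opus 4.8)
The plan is to combine the exact embedding furnished by \cref{thm:embed-flow} with a universal approximation statement for the vector field generating the augmented flow, and then transfer approximation of the vector field to approximation of the time-one map via a continuous-dependence estimate. Made precise, ``universal approximator'' should mean: for every diffeomorphism $h : M \to M$, every compact $K \subseteq M$ and every $\varepsilon > 0$, there is an augmented manifold NODE whose associated diffeomorphism $\tilde{h}$ satisfies $d(\tilde{h}(p), h(p)) < \varepsilon$ for all $p \in K$, where $d$ is any fixed distance inducing the topology of $M$. Fixing such $h$, $K$, $\varepsilon$, the goal is to construct $\tilde{h}$.

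First I would invoke \cref{thm:embed-flow} to obtain a vector field $\phi = [\chi, \psi]$ on $TM$ with $\chi(u,\dot{u}) = \dot{u}$ whose flow $\Phi$ realises $h$ as $h(p) = \pi\,\Phi\left(1, p, 0\right)$, using the convenient initial condition $\dot{u}(0) = 0$; note that the lifted-path construction in \eqref{eq:gamma-lift} produces exactly the ``augmented'' form $\phi(u,\dot u) = [\dot u, \psi(u,\dot u)]$ required in \eqref{eq:augmented-NODE}, so this is a genuine augmented NODE, albeit with $\psi$ not yet a neural network. The trajectories $\{U(t) : t \in [0,1],\ u(0) \in K\}$ form a compact subset $\mathcal{K} \subset TM$, so it suffices to control $\psi$ on a compact neighbourhood $\mathcal{K}'$ of $\mathcal{K}$. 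Assuming $\phi$ smooth (or at least locally Lipschitz) on $\mathcal{K}'$, I would cover $\pi(\mathcal{K}')$ by a finite atlas, work in the induced local trivialisations of $TM$, and apply the Euclidean universal approximation theorem for feed-forward networks to produce a network-parameterised $\psi_\theta$ with $\sup_{\mathcal{K}'} \lVert \psi_\theta - \psi \rVert < \delta$, for $\delta$ to be fixed. Then $\phi_\theta = [\dot u, \psi_\theta]$ defines the augmented manifold NODE $\tilde{h}$ by $\tilde h(p) = \pi\,\Phi_\theta(1,p,0)$.

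The stability step: by the standard Grönwall estimate for continuous dependence of ODE solutions on the right-hand side — applied chart by chart, with local Lipschitz bounds for $\phi$ on $\mathcal{K}'$ and the uniform bound $\lVert \phi_\theta - \phi \rVert < \delta$ — one gets $\sup_{u(0) \in K} d_{TM}\!\left(\Phi_\theta(1,u(0),0), \Phi(1,u(0),0)\right) \le C\delta$ for a constant $C$ depending only on the Lipschitz data over one unit of time, provided $\delta$ is small enough that the perturbed trajectories stay inside $\mathcal{K}'$ (which holds for small $\delta$ by continuity). Composing with the locally Lipschitz projection $\pi$ and choosing $\delta < \varepsilon / \big(C\,\mathrm{Lip}(\pi|_{\mathcal{K}'})\big)$ yields $d(\tilde{h}(p), h(p)) < \varepsilon$ for all $p \in K$. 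Since $h$, $K$, $\varepsilon$ were arbitrary, this establishes \cref{cor:node-univ-approx}.

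The main obstacle I expect is the regularity bookkeeping attached to \cref{thm:embed-flow}: the vector field there is assembled from the tangents of deformed lifted paths and is only guaranteed to be well defined and non-self-intersecting after the deformation, so one must argue it can be taken smooth (or locally Lipschitz) on a neighbourhood of $\mathcal{K}$, and that mollifying it, if needed, changes the time-one map by at most $\varepsilon/2$ — after which the above argument applies to the mollified field with tolerance $\varepsilon/2$. The remaining work — patching the Euclidean approximation theorem and the Grönwall estimate across a finite atlas of $TM$ with uniform Lipschitz constants — is routine but is where the care is concentrated.
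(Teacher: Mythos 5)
Your proposal is correct, and it rests on the same key lemma as the paper — \cref{thm:embed-flow} — but it does genuinely more work than the paper's own argument. The paper treats \cref{cor:node-univ-approx} as an immediate consequence of the embedding theorem: since every diffeomorphism $h$ is \emph{exactly} realised as $h(p) = \pi\,\Phi(1,p,\cdot)$ for some flow $\Phi$ on $TM$ of the augmented form \eqref{eq:augmented-NODE}, the class of augmented manifold NODEs (with the vector field left as an abstract learnable object) already contains $h$, and no approximation argument is given. You instead interpret ``augmented manifold NODE'' as requiring $\psi$ to be a feed-forward network, which forces the two extra steps you supply: chart-by-chart Euclidean universal approximation of $\psi$ on a compact neighbourhood of the trajectory set, and a Gr\"onwall continuous-dependence estimate to transfer $C^0$-closeness of vector fields to $C^0$-closeness of time-one maps on compacta. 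Both steps are standard and correctly deployed; what your version buys is an actual quantitative approximation statement for network-parameterised models, at the cost of the regularity bookkeeping you rightly flag — the vector field produced in the proof of \cref{thm:embed-flow} is assembled from tangents of deformed paths and its smoothness (even its well-definedness as a single-valued field depending smoothly on $p$) is asserted rather than proved in the paper, so your mollification step is addressing a gap in the cited theorem rather than one you introduced. In short: same skeleton, but you have written the proof the corollary would need if one insists on neural-network parameterisations, whereas the paper's statement is really about the representational capacity of the augmented flow class itself.
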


This result generalises the construction in~\cite{Zhang2020} to arbitrary manifolds $M$ and provides an explanation of the geometric origin of the augmentation. The idea behind the augmented construction is illustrated in~\cref{fig:intersection}.

\begin{figure}[!ht]
\centering
\includegraphics[scale=0.4]{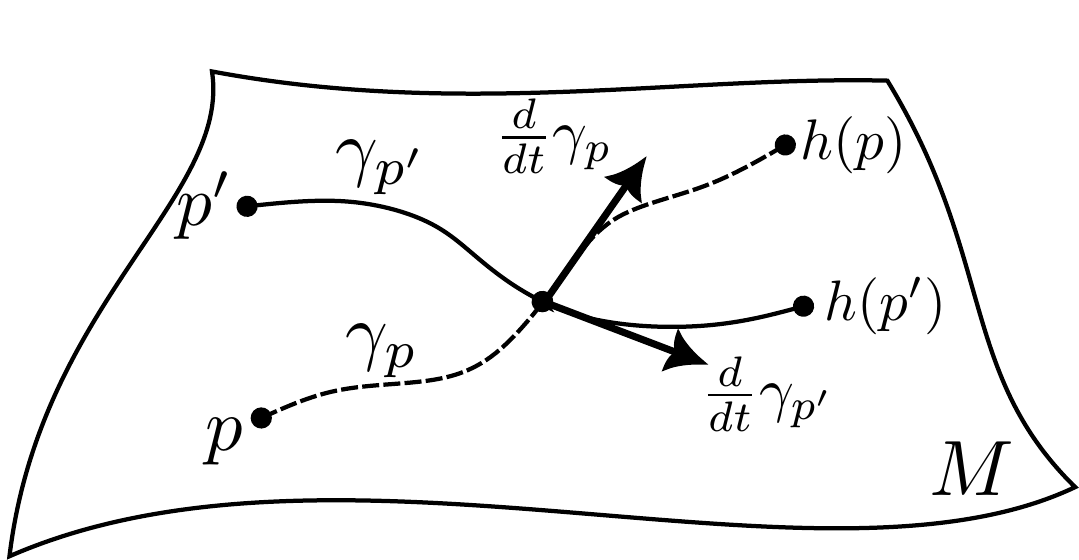}
\caption{Intersection of paths $\gamma_p$ and $\gamma_{p'}$.}
\label{fig:intersection}
\end{figure}

We note that the augmented manifold NODE is equivalent to the second-order ODE $\ddot{u}(t) = \psi(u,\dot{u})$ on $M$, a fact that will prove useful when we now proceed to consider augmenting equivariant manifold NODEs.

\begin{remark}
The lift $\Gamma_p$ can alternatively be described in terms of a section of the first jet bundle $J^{(1)}$. Specifically, the condition $\chi(u,\dot{u}) = \dot{u}$ corresponds to restricting the vector field $\phi$ to $J^{(1)}$. However, for the purpose of describing flows on $M$, and the subsequent construction of augmented NODEs, the formulation in terms of the tangent bundle $TM$ and the lift~\eqref{eq:gamma-lift} is more convenient.
\end{remark}

\begin{remark}
Connectedness of $M$ is a strict limitation in any flow-based model, since integral curves can never span disconnected components. An equivalent statement to the one made above is that manifold NODEs are universal approximators for diffeomorphisms restricted to the individual connected components of $M$.
\end{remark}

\subsection{Augmented equivariant manifold NODEs}
In order to extend augmentation to equivariant manifold NODEs we first show  that the construction in~\cref{thm:embed-flow} is equivariant with respect to the action of $G$ on $TM$ induced by the action on $M$. 

\begin{theorem}\label{thm:embed-equiv-flow}
Let $G$ be a Lie group acting on a connected, smooth manifold $M$ and let $h:M \to M$ be an equivariant diffeomorphism satisfying $h(L_g p) = L_g h(p)$ for every $p \in M$ and $g \in G$. Then $h$ can be embedded in a $G$-equivariant flow on $TM$.
\end{theorem}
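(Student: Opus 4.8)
The plan is to reuse the construction from the proof of \cref{thm:embed-flow}, but carry it out in a $G$-equivariant fashion. The key point is to choose, for each $p \in M$, the path $\gamma_p : [0,1] \to M$ from $p$ to $h(p)$ in a way that commutes with the $G$-action, i.e., so that $\gamma_{L_g p} = L_g \circ \gamma_p$ for all $g \in G$. If this can be arranged, then the corresponding lifts $\Gamma_p$ to $TM$ defined by \eqref{eq:gamma-lift} satisfy $\Gamma_{L_g p}(t) = (L_g)_* \Gamma_p(t)$, where $(L_g)_*$ denotes the induced action on $TM$; since the induced tangent-bundle action maps integral curves of the lifted vector field to integral curves, the flow $\Phi$ on $TM$ generated by the gradients of the $\Gamma_p$ is then automatically $G$-equivariant. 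The embedding of $h$ via \eqref{eq:h-embedding} respects the action because $\pi : TM \to M$ intertwines the induced action on $TM$ with the original action on $M$, and the equivariance of $h$ guarantees consistency of the endpoints $\gamma_{L_g p}(1) = h(L_g p) = L_g h(p)$.

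The first step is therefore to construct an equivariant family of paths. I would do this orbit by orbit: pick a representative $p_0$ in each orbit $\mathcal{O}$, choose any path $\gamma_{p_0}$ from $p_0$ to $h(p_0)$, and then \emph{define} $\gamma_{L_g p_0} := L_g \circ \gamma_{p_0}$. The only subtlety is well-definedness when the stabiliser of $p_0$ is nontrivial: if $L_g p_0 = L_{g'} p_0$ we need $L_g \circ \gamma_{p_0} = L_{g'} \circ \gamma_{p_0}$, which need not hold for an arbitrary choice of $\gamma_{p_0}$. One clean way around this is to average: since $G$ is a connected Lie group and we only need a smooth family of paths, one can either work with a slice for the (semi-regular) action and extend equivariantly off the slice, or — when the stabilisers are compact — replace $\gamma_{p_0}$ by a stabiliser-averaged path. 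A cleaner alternative that avoids stabiliser issues altogether is to observe that \cref{thm:embed-flow} has already produced \emph{some} flow embedding $h$; it then suffices to symmetrise the resulting vector field on $TM$ over $G$, using that $h$ is equivariant so the symmetrised flow still embeds $h$.

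Concretely, the second approach runs as follows. By \cref{thm:embed-flow} there is a vector field $\tilde\phi$ on $TM$ whose flow $\tilde\Phi$ embeds $h$ as in \eqref{eq:h-embedding}. For each $g \in G$, the pushed-forward vector field $(L_g)_* \tilde\phi$ (under the induced action on $TM$) also has a flow embedding $h$, because conjugating $\tilde\Phi$ by the induced action and projecting down gives $L_g \circ h \circ L_g^{-1} = h$ by equivariance of $h$. Now define $\phi$ by averaging $(L_g)_*\tilde\phi$ over $G$ with respect to (a partition-of-unity-localised version of) Haar measure if $G$ is compact, or over a maximal compact together with a careful treatment of the noncompact directions in general; the averaged field is $G$-equivariant by construction, and — because the constraint $\chi(u,\dot u)=\dot u$ of \eqref{eq:augmented-NODE} is preserved under the induced action and under convex combinations — it still defines a legitimate augmented NODE embedding $h$. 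Invoking the equivalence (i)$\Leftrightarrow$(ii) of \cref{thm:main-equiv-NODE} (applied on the manifold $TM$) then gives that the associated flow is $G$-equivariant.

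The main obstacle is handling noncompact $G$, where there is no finite Haar average and the ``averaging'' argument must be replaced by something more careful — either an equivariant slice argument for the semi-regular action (building $\gamma_{p_0}$ on a slice and transporting it), or a direct construction of a $G$-invariant connection-like choice of paths. I expect the cleanest writeup to go via the first approach (equivariant family of paths built from a slice), with the averaging argument mentioned as the intuition in the compact case; the path-intersection/general-position argument from \cref{thm:embed-flow} carries over verbatim once one notes that an equivariant perturbation of the $\gamma_p$ can be chosen (perturb the representatives $\gamma_{p_0}$ on the slice and transport), so that distinct lifts $\Gamma_p$ remain disjoint in $TM$.
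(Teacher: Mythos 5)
Your primary route --- defining $\gamma_{L_g p} := L_g \circ \gamma_p$ orbit by orbit, lifting to $TM$ via \eqref{eq:gamma-lift}, and checking that the induced action intertwines the lifts and their tangents so that the resulting flow and the projection $\pi$ preserve equivariance --- is exactly the paper's proof, and your remark about well-definedness at points with nontrivial stabiliser is a legitimate subtlety the paper's write-up glosses over. One caution on your alternative ``symmetrise the vector field'' argument: averaging the fields $(L_g)_*\tilde\phi$ does not in general preserve the time-$1$ map (the flow of an average is not the average of the flows), so the averaged field need not still embed $h$; stick with the path-based construction as you propose.
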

\begin{proof}
Let $p \in M$, $\gamma_p$ be a path from $p$ to $h(p)$ and $\Gamma_p$ be the lift~\eqref{eq:gamma-lift}. For every $L_g p$ in the orbit of $p$ we construct the corresponding path as $\gamma_{L_g p} = L_g \gamma_p$, which satisfies $\gamma_{L_g p}(0) = L_g p$, $\gamma_{L_g p}(1) = L_g h(p) = h(L_g p)$ as required, by the equivariance of $h$. The action of $G$ on $M$ induces an action on the lift in $TM$ as
\begin{equation}
\label{eq:action-gamma-lift}
    L_g \Gamma_p(t) = \left[ L_g \gamma_p(t), (L_g)_* \frac{d}{dt} \gamma_p(t) \right] \,.
\end{equation}
From the definition of $\gamma_{L_g p}$ we have
\begin{equation*}
    (L_g)_* \frac{d}{dt} \gamma_p(t) = \frac{d}{dt} L_g \gamma_{p}(t) = \frac{d}{dt} \gamma_{L_g p}(t) \,,
\end{equation*}
which implies that $L_g \Gamma_p = \Gamma_{L_g p}$.

The vector tangent to the lift $\Gamma_p$ in $TM$ is
\begin{equation*}
    \frac{d}{dt} \Gamma_p(t) = \left[ \frac{d}{dt} \gamma_p, \frac{d^2}{dt^2} \gamma_p, \right] \,.
\end{equation*}
Considering the induced action of $G$ in~\eqref{eq:action-gamma-lift} and its push-forward, we have
\begin{equation}
    (L_g)_* \frac{d}{dt}\Gamma_p = \left[\frac{d}{dt} \gamma_{L_g p} , \frac{d^2}{dt^2} \gamma_{L_g p} \right] =
    \frac{d}{dt}\Gamma_{L_g p} \,.
\end{equation}
Consequently, the flow $\Phi : \mathbb{R} \times TM \to TM$ generated by the gradients of $\Gamma_p$ is equivariant, or equivalently, compatible with the induced action of $G$ on the lift $\Gamma_p$.

Finally, since $\pi \circ L_g = L_g \circ \pi$ by construction, the embedding of $h$ in~\eqref{eq:h-embedding} preserves the equivariance
\begin{align*}
    h(L_g p) &= \pi \Phi\left(1,L_g p,\frac{d}{dt}\left.\gamma_{L_g p}(t)\right|_{t=0}\right) \\
    &= \pi L_g \Phi\left(1,p,\frac{d}{dt}\left.\gamma_{p}(t)\right|_{t=0}\right) =  L_g h(p)\,.
\end{align*}
\end{proof}

The lift $\Gamma_p$ and its tangent vector are illustrated in~\cref{fig:lift-gamma-p}.
\begin{figure}[!ht]
\centering
\includegraphics[scale=0.43]{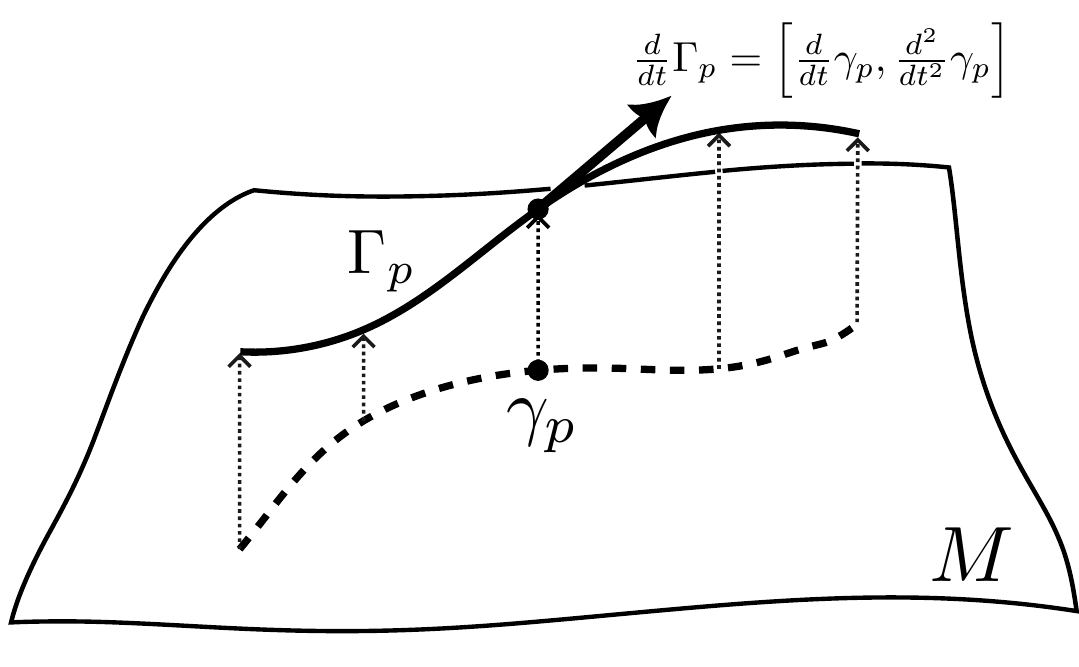}
\caption{The lift $\Gamma_p$ of $\gamma_p$ and the vector $\frac{d}{dt}\Gamma_p$ tangent to $TM$.}
\label{fig:lift-gamma-p}
\end{figure}

\Cref{thm:embed-equiv-flow} guarantees that any equivariant diffeomorphism $h:M \to M$ can be embedded in an equivariant flow on $TM$, and that augmentation can be extended to equivariant manifold NODEs. To incorporate equivariance under the induced action of $G$ on $TM$ into the augmented manifold NODE in~\eqref{eq:augmented-NODE}, we require that the vector field $\phi$ is equivariant.~\Cref{thm:main-equiv-NODE} then ensures that the solution to~\eqref{eq:augmented-NODE} is an equivariant flow $\Phi : \mathbb{R} \times TM \to TM$ which can approximate any equivariant diffeomorphism $h : M \to M$ by the embedding in~\cref{thm:embed-equiv-flow}.

\begin{corollary}\label{cor:equiv-node-univ-approx}
Augmented $G$-equivariant manifold NODEs are universal approximators of $G$-equivariant diffeomorphisms $h : M \to M$.
\end{corollary}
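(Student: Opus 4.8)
The plan is to assemble \Cref{cor:equiv-node-univ-approx} from the three ingredients already in place: the equivariant embedding theorem (\cref{thm:embed-equiv-flow}), the equivalence of equivariance notions (\cref{thm:main-equiv-NODE}), and the non-equivariant universality corollary (\cref{cor:node-univ-approx}). The logical skeleton is: given a target $G$-equivariant diffeomorphism $h:M\to M$, \cref{thm:embed-equiv-flow} produces a $G$-equivariant flow $\Phi:\mathbb{R}\times TM\to TM$ on the tangent bundle into which $h$ is embedded via \eqref{eq:h-embedding}. The generating vector field $\phi=[\chi,\psi]$ of this flow is, by \cref{thm:main-equiv-NODE} applied on the manifold $TM$ with the induced $G$-action, itself $G$-equivariant. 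So it suffices to approximate this particular equivariant $\phi$ by a learnable equivariant vector field, and then argue that closeness of vector fields yields closeness of the induced time-$1$ maps and hence of the resulting diffeomorphisms on $M$.

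The key steps, in order, would be: (1) Fix $h$ equivariant and invoke \cref{thm:embed-equiv-flow} to obtain the equivariant lift data $\gamma_p$, $\Gamma_p$, the equivariant flow $\Phi$ on $TM$, and the vector field $\phi=[\dot u,\psi(u,\dot u)]$ with $\phi_{L_g p}=(L_g)_*\phi_p$. (2) Observe that by \cref{thm:main-equiv-NODE} (stated for an arbitrary connected manifold, here $TM$) equivariance of $\phi$, of $\Phi$, and of the time-$1$ map are interchangeable, so the augmented equivariant NODE model class — vector fields on $TM$ of the form $[\dot u,\psi]$ that are $G$-equivariant — contains arbitrarily good approximants exactly when we can approximate $\psi$. (3) Note that the differential-invariant parameterisation (\cref{thm:invariants-node}, and its higher-order analogue mentioned in the text) gives an \emph{unconstrained} way to write down all equivariant $\phi$ on $TM$, so a standard universal-approximation argument for the unconstrained parameterising functions yields a sequence $\phi^{(k)}\to\phi$ uniformly on compact sets. (4) Conclude by a continuous-dependence-on-vector-field argument: if $\phi^{(k)}\to\phi$ in the appropriate $C^0$-on-compacts (or $C^1$) sense, the corresponding flows $\Phi^{(k)}(1,\cdot)\to\Phi(1,\cdot)$ uniformly on compacts, and projecting with $\pi$ gives $h^{(k)}\to h$; since each $h^{(k)}$ is the augmented $G$-equivariant NODE for $\phi^{(k)}$, this is precisely universality.

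I expect the main obstacle — really the only non-bookkeeping point — to be step (4): making precise the sense in which "approximating the vector field approximates the diffeomorphism." On a noncompact manifold one needs care about completeness of the flows and about what norm the approximation is measured in; the clean route is to restrict attention to compact subsets $K\subset M$ (equivalently compact subsets of the zero section lifted into $TM$, using the convenient initial condition $\dot u(0)=0$), use Grönwall-type estimates for the dependence of solutions of \eqref{eq:augmented-NODE} on the right-hand side $\psi$ over the finite time interval $[0,1]$, and invoke smoothness of the bundle projection $\pi$. A secondary subtlety is ensuring the approximants stay inside the admissible model class, i.e. that they retain the structural constraint $\chi(u,\dot u)=\dot u$ and the $G$-equivariance; this is automatic if one only ever approximates the free functions appearing in the differential-invariant parameterisation of $\psi$, which is why invoking \cref{thm:invariants-node} (and the higher-order version) rather than approximating $\phi$ as an arbitrary map is the right framing. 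Everything else — the embedding, the equivalence of equivariance notions — is quoted directly from the preceding results, so the proof is essentially a short synthesis with one analytic estimate.
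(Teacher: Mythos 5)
Your proposal follows essentially the same route as the paper: combine \cref{thm:embed-equiv-flow} (the target equivariant $h$ embeds in a $G$-equivariant flow on $TM$) with \cref{thm:main-equiv-NODE} (equivariance of that flow is equivalent to equivariance of its generating vector field), so the augmented equivariant model class contains a generator for every equivariant diffeomorphism. Your steps (3)--(4) --- approximating only the free functions in the differential-invariant parameterisation and a Gr\"onwall-type continuous-dependence estimate on $[0,1]$ --- supply analytic detail that the paper leaves implicit in the word ``approximator,'' but do not change the argument.
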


From the observation that the augmented manifold NODE is equivalent to the second-order ODE $\ddot{u}(t) = \psi(u,\dot{u})$ on $M$, it follows that differential invariants of $G$ can be used to parameterise the equivariant vector fields $\phi = [\dot{u},\psi(u,\dot{u})]$ on $TM$ tangent to lifts $U(t)$ analogously to~\cref{thm:invariants-node}. From our construction, and~\cref{thm:main-equiv-NODE}, it is clear that the solution $u(t)$ is $G$-equivariant, in the sense that $L_g u(t)$ is also a solution to $\ddot{u} = \psi(u,\dot{u})$, if and only if the flow $\Phi$ is $G$-equivariant. Consequently, the most general form of an augmented equivariant NODE can be deduced from~\cref{thm:general-sym-ODE}.

\begin{theorem}\label{thm:invariants-aug-node}
Given a complete set of second-order differential invariants $I_1,\ldots,I_{\mu_2}$ of $G$, the most general augmented $G$-equivariant NODE is (locally) of the form $H(I_1,\ldots,I_{\mu_2})=0$ where $H:\mathbb{R}^{\mu_2} \to \mathbb{R}^n$ is an arbitrary function.
\end{theorem}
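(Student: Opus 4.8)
The plan is to reduce the statement to a direct application of \cref{thm:general-sym-ODE} with $k=2$, via the dictionary between augmented manifold NODEs on $TM$ and second-order ODEs on $M$ that was already noted below \cref{cor:node-univ-approx}. First I would recall that an augmented manifold NODE is the flow on $TM$ generated by a vector field $\phi = [\chi,\psi]$ subject to the consistency constraint $\chi(u,\dot{u}) = \dot{u}$; this constraint is precisely what identifies the flow line through $[u(0),\dot{u}(0)]$ with the first prolongation $\gamma_u^{(1)}$ of a curve $u$ on $M$, and hence identifies $\phi$ with the second-order ODE $\ddot{u} = \psi(u,\dot{u})$, i.e.\ with $\Delta(t,u,u^{(1)},u^{(2)}) = u^{(2)} - \psi(u,u^{(1)}) = 0$ on $M$. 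So the model space of augmented NODEs is in bijection with the space of such second-order equations.

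Next I would invoke \cref{thm:main-equiv-NODE}, applied to the flow on $TM$ (equivalently, to the associated second-order equation on $M$): the augmented NODE $h : M \to M$ is $G$-equivariant if and only if the generating vector field $\phi$ on $TM$ is equivariant under the induced action of $G$, which in turn holds if and only if $L_g u(t)$ is again a solution of $\ddot{u} = \psi(u,\dot{u})$ for every $g \in G$ — that is, if and only if $G$ is a symmetry group of $\Delta = 0$ in the sense of \cref{sec:diff_invariants}. Here one must check that the induced $G$-action on $TM$ restricted to lifts is exactly the first prolongation $L_g^{(1)}$, and that the symmetry condition for the second-order equation is the infinitesimal condition $X_i^{(2)}(\Delta)|_{\Delta=0} = 0$ with the second prolongations $X_i^{(2)}$ of the generators, restricted to $M^{(2)}$ as in \cref{sec:diff_invariants}. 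Since by assumption $G$ acts semi-regularly — and, as one should note, this carries over to a semi-regular action of $G^{(2)}$ on $J^{(2)}E$ away from a lower-dimensional locus — \cref{lemma:invariants-dim} supplies a complete set $I_1,\dots,I_{\mu_2}$ of functionally independent second-order differential invariants.

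With this dictionary in hand, \cref{thm:general-sym-ODE} with $k=2$ applies essentially verbatim: the most general second-order ODE on $M$ for which $G$ is a symmetry group is, locally, of the form $H(I_1,\ldots,I_{\mu_2}) = 0$ for an arbitrary $H : \mathbb{R}^{\mu_2} \to \mathbb{R}^n$. Translating back through the dictionary, this is exactly the most general augmented $G$-equivariant NODE; and, conversely, solving $H(I_1,\ldots,I_{\mu_2}) = 0$ for $u^{(2)}$ to recover $\ddot{u} = \psi(u,\dot{u})$ and then embedding via \cref{thm:embed-equiv-flow} shows that every such $H$ indeed yields a bona fide augmented equivariant model, with universality following from \cref{cor:equiv-node-univ-approx}.

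I expect the main obstacle to be the bookkeeping in the second step: making precise that equivariance of $\phi$ on $TM$ under the induced action coincides with $G$ being a symmetry of the second-order equation on $M$, and in particular that the consistency constraint $\chi = \dot{u}$ is compatible with — indeed automatically preserved by — the prolonged action, so that one genuinely lands in the hypotheses of \cref{thm:general-sym-ODE} rather than some constrained variant of it. A secondary point needing care is the semi-regularity of the prolonged action on $J^{(2)}E$ required by \cref{lemma:invariants-dim}, which may fail on a lower-dimensional subset and is the source of the \emph{locally} qualifier in the statement.
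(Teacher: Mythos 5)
Your proposal matches the paper's argument: the paper likewise identifies the augmented NODE with the second-order ODE $\ddot{u}=\psi(u,\dot{u})$ on $M$, uses \cref{thm:main-equiv-NODE} to equate equivariance of the flow on $TM$ with $G$ being a symmetry group of that equation, and then applies \cref{thm:general-sym-ODE} with $k=2$. Your additional remarks on the lift/prolongation identification and on semi-regularity of the prolonged action are careful elaborations of points the paper treats implicitly, not a different route.
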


In this way, the second-order differential invariants parameterise equivariant vector fields restricted to lifts, and equivalently the space of augmented equivariant manifold NODEs.

\begin{remark}
The geometric objects appearing in the equivariant augmentation can, as before, be understood in terms of jet bundles. Enforcing equivariance of $\phi$ corresponds to prolonging the action of $G$ to sections of the second jet bundle $J^{(2)}$, which are equivalent to lifts of $\Gamma_p$ to the tangent bundle of $TM$.
\end{remark}

To illustrate the above construction of equivariant augmentation and its universality properties, we return to the setting of~\cref{example:rot_eq_NODE} and consider the problem of learning a topologically non-trivial equivariant diffeomorphism.
\begin{example}\label{example:augmentation}
Let $M$ and $G$ be as in~\cref{example:rot_eq_NODE} and consider the augmented NODE $\dot{U}(t) = \phi_{U(t)}$, where $U(t)=\left[x(t),y(t),\dot{x}(t),\dot{y}(t)\right]$. To parameterise the space of such models equivariant under $G$ we consider second-order differential invariants of $G$. These are obtained from the second prolongation of the vector field $X$, which is given by
\begin{equation*}
    X^{(2)}=-y\partial_x+x\partial_y-\dot{y}\partial_{\dot{x}}+\dot{x}\partial_{\dot{y}}
    -\ddot{y}\partial_{\ddot{x}}+\ddot{x}\partial_{\ddot{y}}.
\end{equation*}
By solving the infinitesimal condition $X^{(2)}(I) = 0$ we obtain the first-order invariants $I_1,\ldots,I_4$ in~\cref{example:rot_eq_NODE} and the additional second-order invariants\footnote{Explicit expressions for $I_5$ and $I_6$ in terms of the Cartesian coordinates $[x,y]$ are suppressed for the sake of brevity.}
\begin{equation*}
    I_5=r^3\ddot{r} \,, \quad I_6=r^4\ddot{\theta}.
\end{equation*}

By~\cref{thm:invariants-aug-node}, any second-order $G$-equivariant ODE on $M$ can be written as $H(t,r,r\dot{r},r^2\dot{\theta},\ddot{r},\ddot{\theta})=0\in\mathbb{R}^2$ for some arbitrary function $H$. The equivalent expression in terms of $\ddot{u} = \psi(\dot{u},u)$ is given by 
\begin{equation*}
\begin{cases}
    \ddot{x}=A(t,r,r\dot{r},r^2\dot{\theta})\dot{x}-B(t,r,r\dot{r},r^2\dot{\theta})\dot{y}\\
    \ddot{y}=B(t,r,r\dot{r},r^2\dot{\theta})\dot{x}+A(t,r,r\dot{r},r^2\dot{\theta})\dot{y}
\end{cases},
\end{equation*}
where $A$ and $B$ are two arbitrary functions. Note that $\ddot{u}(t) = [\ddot{x}(t),\ddot{y}(t)]$ depends non-trivially on both $u$ and $\dot{u}$ through $A$ and $B$. In terms of~\eqref{eq:augmented-NODE}, this implies that the most general vector field $\phi$ in~\eqref{eq:augmented-NODE} restricted to lifts in $TM$ is
\begin{equation}\label{eq:aug_example}
\begin{aligned}
    \phi_{U(t)}=& \Bigl[ \dot{x}(t),\dot{y}(t),A(t,r,r\dot{r},r^2\dot{\theta})\dot{x}(t)-B(t,r,r\dot{r},r^2\dot{\theta})\dot{y}(t),\\
&B(t,r,r\dot{r},r^2\dot{\theta})\dot{x}(t)+A(t,r,r\dot{r},r^2\dot{\theta})\dot{y}(t)\Bigr].
\end{aligned}
\end{equation}

This is our desired parameterisation of equivariant vector fields in the augmented NODE, where $A$ and $B$ can again be approximated using feed-forward neural networks and the output is obtained by the projection $\pi$ onto the first two components of $U(t)$.

An example of rotationally equivariant diffeomorphism that cannot be approximated by a non-augmented NODE is the mapping $h:(r,\theta)\mapsto(1/r,\theta)$. We let the data set consist of 100 equally spaced points on $[-2,2]\times[-2,2]$ and use NODE models of the forms~\eqref{eq:rot_eq} and~\eqref{eq:aug_example} and train them to map the input points to their respective images under $h$. The performance of the non-augmented NODE and augmented NODE are compared in~\cref{fig:example_augmentation}. In the non-augmented case, the best performance is obtained when the points are transformed to a circle minimising the total distance to the target points. Crossing this circle is not possible due to the limitation of flow lines not being able to intersect. The augmented case, however, performs much better, with the output points almost overlapping the target points after $1000$ epochs of training.

\begin{figure}[htp]
\begin{subfigure}{\textwidth}
\centering
\includegraphics[height=4.1cm]{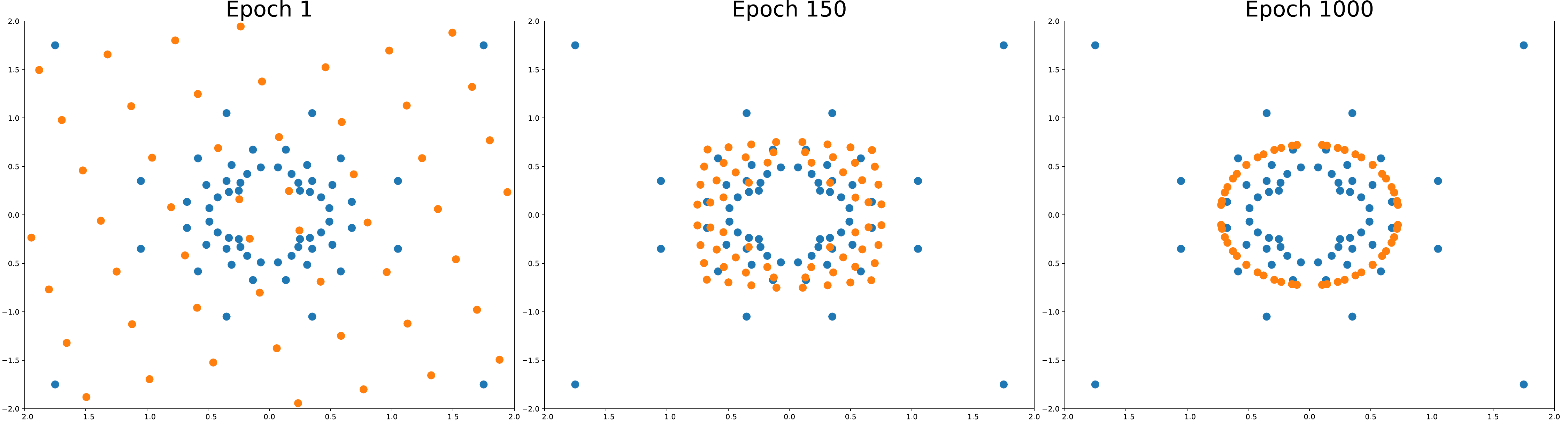}
\caption{Non-augmented case.}
\end{subfigure}

\bigskip

\begin{subfigure}{\textwidth}
\centering
\includegraphics[height=4.1cm]{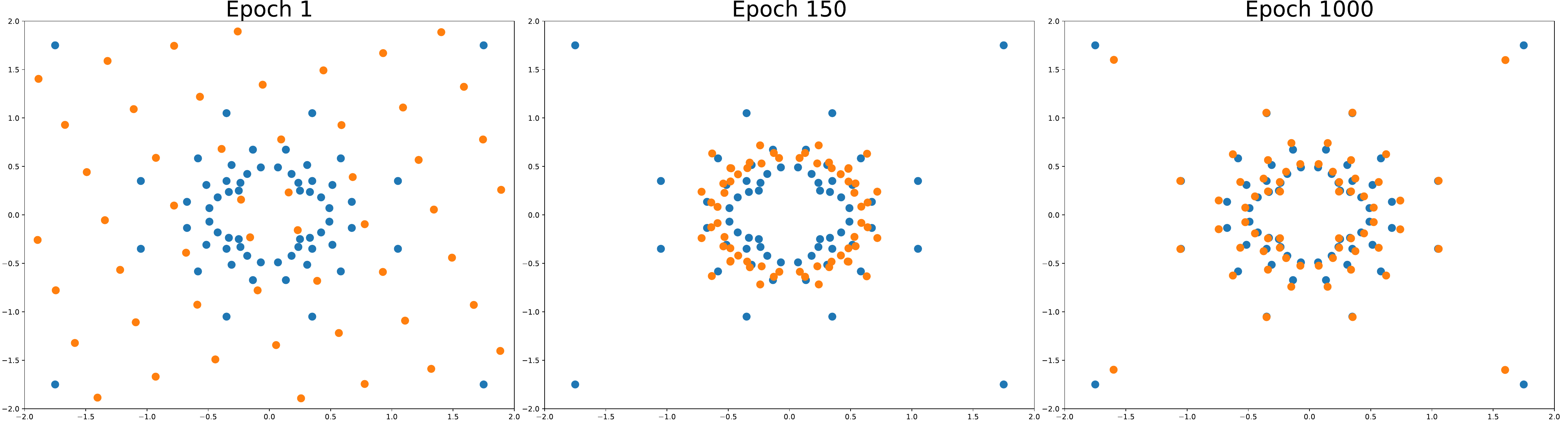}
\caption{Augmented case.}
\end{subfigure}
\caption{In \cref{example:augmentation}, we train a non-augmented NODE model and an augmented NODE model to approximate the equivariant diffeomorphism $h(r,\theta) = (1/r,\theta)$. The orange dots represent the resulting outputs by the models, while the blue dots represent the target values. In (a), the neural ODE has not been augmented. We see that, as a result of the solution curves not being able to cross, the outputs freeze on a circle minimising the distance to the targets after around 1000 epochs. In (b), the NODE model has been augmented to the tangent bundle. As a result, the outputs are able to better approximate the targets.
}
\label{fig:example_augmentation}
\end{figure}

To further illustrate the differences between the non-augmented and augmented model, we plot the flow lines from $t=0$ to $t=1$ for sample points with the same angular coordinate $\theta$.~\Cref{fig:flow-lines-non-aug} shows the flow lines $r(t)$ from $t=0$ to $t=1$ in the non-augmented case after 1000 epochs of training. We see that the curves appear to converge to a specific value of $r$ (around $r=0.8$), not intersecting. In~\cref{fig:flow-lines-aug}, we show the flow lines $r(t)$ and $\dot{r}(t)$ in the augmented case after 1000 epochs of training. We see that an intersection occurs in the left plot in~\cref{fig:flow-lines-aug}, while there are no intersections in the plot on the right-hand side.

\begin{figure}[!ht]
\centering
\includegraphics[scale=0.6]{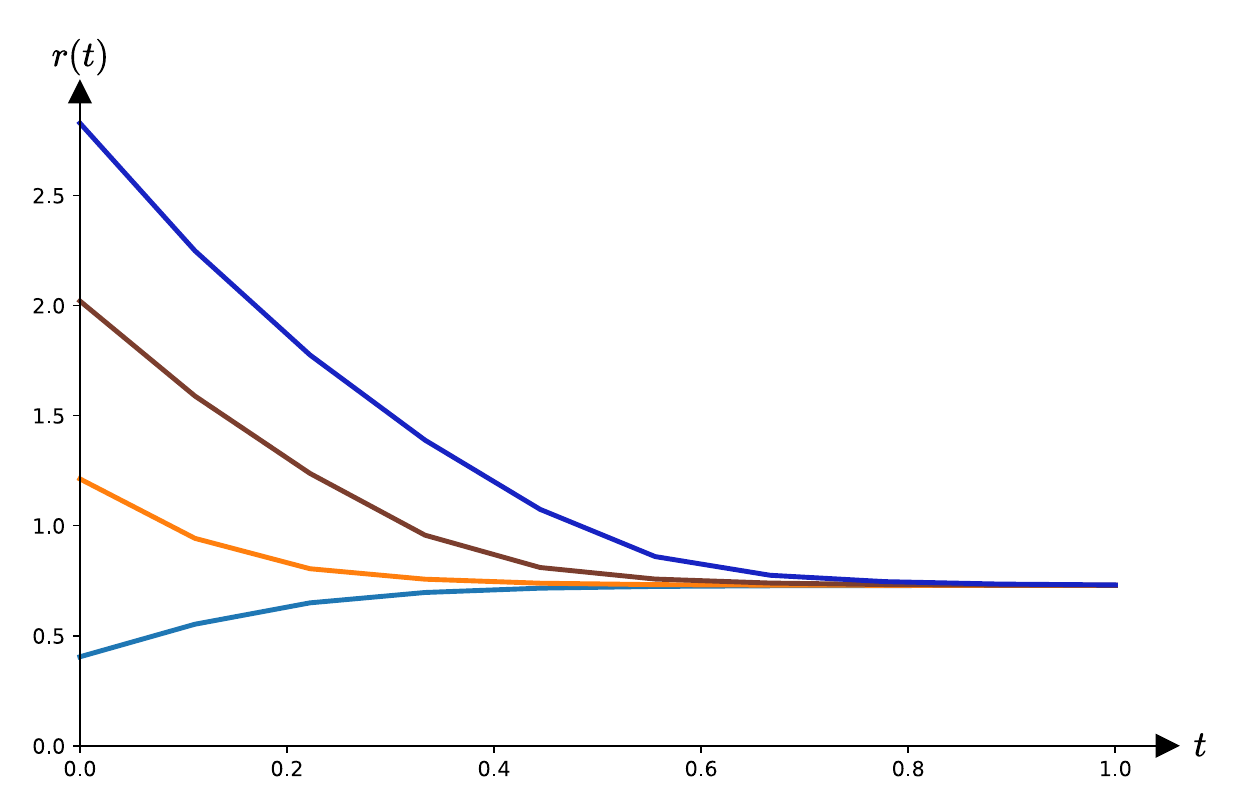}
\caption{Flow lines of the non-augmented NODE model in~\cref{example:augmentation} after 1000 epochs at $\theta\approx0.25\pi$. The curves converge to a specific value of $r$, but do not cross.}
\label{fig:flow-lines-non-aug}
\end{figure}

\begin{figure}[!ht]
\centering
\includegraphics[scale=0.45]{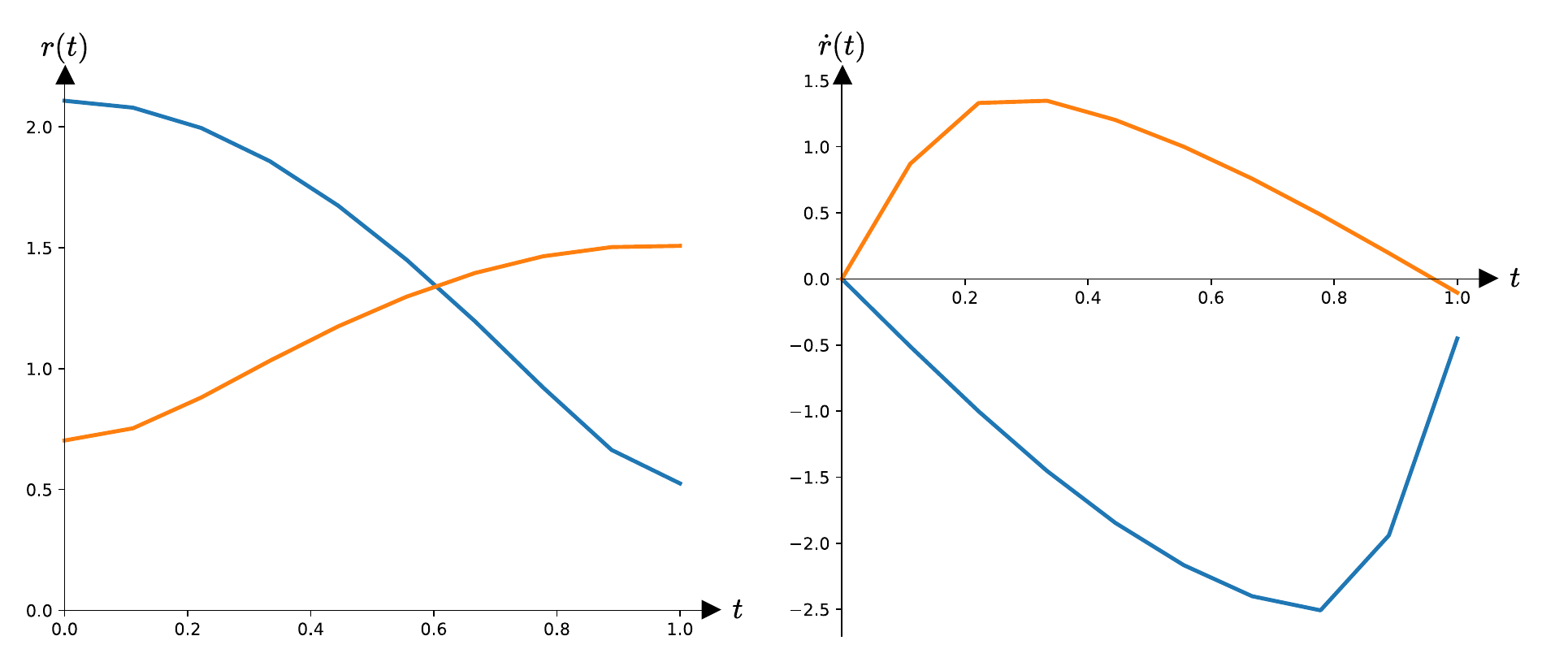}
\caption{Flow lines $r$ and $\dot{r}$ of the augmented NODE model in~\cref{example:augmentation} at $\theta\approx0.6\pi$. The model has been trained for 1000 epochs. Intersection in the left figure is allowed since the paths do not simultaneously cross in the figure to the right, which means that no intersections occur on the tangent bundle $TM$.
}
\label{fig:flow-lines-aug}
\end{figure}

This example illustrates the increased performance obtained using an augmented neural ODE to approximate topologically non-trivial diffeomorphisms.~\Cref{fig:flow-lines-aug} shows that lifting the solution curves to the tangent bundle $TM$ allows for intersections on the base space $M$, providing a more accurate solution.

\end{example}

\section{Equivariant fields and densities}\label{sec:eq-fields-and-den}
Having constructed the geometric framework for equivariant manifold NODEs and their augmentations, we now discuss how they are used to construct models acting on different objects on $M$, or feature maps in machine learning parlance. More specifically, this amounts to describing the induced action of the diffeomorphism $h : M \to M$ on such objects, and their properties under the action of $G$. We consider functions, densities and vector fields explicitly, but the principle can be applied to any kind of feature map.

\subsection{Equivariant functions and densities}

For a function $f:M\to\mathbb{R}$, or scalar field, the map induced by the diffeomorphism $h : M \to M$ is simply $f_h=f \circ h^{-1}$. Symmetry of a scalar function $f_h$ amounts to invariance, meaning $f(L_g p)=f(p)$. That equivariance of $h$ and $f$ implies equivariance of $f_h$ can be shown in one line as
\begin{equation*}
    f_h(L_g p)=f(h^{-1}(L_g p))=f(L_g h^{-1}(p))= f_h(p).
\end{equation*}

Next, we consider a (probability) density $\rho$ on $M$, in which case the induced action of $h$ is obtained as
\begin{equation}\label{eq:induced_prob}
    \rho_{h}(p)=\rho\left(h^{-1}(p)\right)\lvert\det{J_{h^{-1}}(p)}\rvert\,,
\end{equation}
where the Jacobian determinant accounts for the induced action on the volume element on $M$.

In~\cite{Kohler2020,Katsman2021} the authors show that, given a density $\rho$, invariant under the isometry group $G$, and a diffeomorphism $h:M \to M$ equivariant under $G$ (or some subgroup $H<G$), the transformed density $\rho_{h}$ is also $G$-invariant (or $H$-invariant). This makes it possible to construct continuous normalizing flows which preserve the symmetries of the densities.

The following theorem generalises the results on invariant densities in~\cite{Kohler2020,Katsman2021} to the case of arbitrary groups acting on $M$, which means that the Jacobian determinant corresponding to the change in the volume element is not necessarily unity. Equivariance under $G$ in the context of densities amounts to preservation of the probability associated with the infinitesimal volume element. Consequently, equivariance of $\rho$ is given by the following definition.

\begin{definition}
    The density $\rho$ on $M$ is equivariant if 
\begin{equation*}
    \rho(p)=\rho(L_g^{-1}p)\,\lvert{\det{J_{L_g^{-1}}(p)}}\rvert
\end{equation*}
for every $g \in G$ and $p \in M$. 
\end{definition}
\begin{theorem}\label{thm:induced-action-density}
Let $\rho$ be a $G$-equivariant density on $M$. If $h : M \to M$ is a $G$-equivariant diffeomorphism, then the induced density $\rho_h$ is $G$-equivariant.

\end{theorem}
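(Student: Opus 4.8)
The plan is to compute $\rho_h(L_g p)$ directly from the definition~\eqref{eq:induced_prob} and reduce it to $\rho_h(p)$ using the $G$-equivariance of both $h$ and $\rho$, together with the chain rule for Jacobian determinants. First I would write
\begin{equation*}
    \rho_h(L_g p) = \rho\bigl(h^{-1}(L_g p)\bigr)\,\lvert\det J_{h^{-1}}(L_g p)\rvert.
\end{equation*}
Since $h$ is $G$-equivariant, so is $h^{-1}$ (from $h \circ L_g = L_g \circ h$ one gets $h^{-1} \circ L_g = L_g \circ h^{-1}$), hence $h^{-1}(L_g p) = L_g h^{-1}(p)$. Substituting and writing $q = h^{-1}(p)$, the first factor becomes $\rho(L_g q)$, which by the equivariance of $\rho$ (applied with $g$ replaced by $g^{-1}$, i.e. $\rho(L_g q) = \rho(q)\,\lvert\det J_{L_g}(q)\rvert^{-1}$, or more directly $\rho(L_g q)\,\lvert\det J_{L_g}(q)\rvert\cdot\lvert\det J_{L_g}(q)\rvert^{-1}$) can be traded for $\rho(q)$ at the cost of a Jacobian factor.

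The key technical step is bookkeeping of the three Jacobian determinants that appear: $J_{h^{-1}}(L_g p)$, $J_{L_g}$ evaluated at the appropriate point, and $J_{h^{-1}}(p)$. The clean way is the chain rule applied to the identity $h^{-1} \circ L_g = L_g \circ h^{-1}$: differentiating at $p$ gives
\begin{equation*}
    J_{h^{-1}}(L_g p)\, J_{L_g}(p) = J_{L_g}(h^{-1}(p))\, J_{h^{-1}}(p),
\end{equation*}
so taking absolute values of determinants,
\begin{equation*}
    \lvert\det J_{h^{-1}}(L_g p)\rvert = \frac{\lvert\det J_{L_g}(h^{-1}(p))\rvert\,\lvert\det J_{h^{-1}}(p)\rvert}{\lvert\det J_{L_g}(p)\rvert}.
\end{equation*}
Plugging this in, along with $\rho(L_g h^{-1}(p)) = \rho(h^{-1}(p))/\lvert\det J_{L_g^{-1}}(L_g h^{-1}(p))\rvert^{-1}$ — more usefully, the defining relation for $\rho$ rearranged as $\rho(L_g q)\,\lvert\det J_{L_g}(q)\rvert = \rho(q)$ evaluated at $q = h^{-1}(p)$ — the factor $\lvert\det J_{L_g}(h^{-1}(p))\rvert$ cancels, leaving exactly $\rho(h^{-1}(p))\,\lvert\det J_{h^{-1}}(p)\rvert / \lvert\det J_{L_g}(p)\rvert = \rho_h(p)/\lvert\det J_{L_g}(p)\rvert$. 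Since $\lvert\det J_{L_g^{-1}}(L_g p)\rvert = \lvert\det J_{L_g}(p)\rvert^{-1}$, this is precisely the statement $\rho_h(L_g p)\,\lvert\det J_{L_g}(p)\rvert = \rho_h(p)$, i.e. $\rho_h(p') = \rho_h(L_g^{-1} p')\,\lvert\det J_{L_g^{-1}}(p')\rvert$ with $p' = L_g p$, which is the claimed $G$-equivariance of $\rho_h$.

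I expect the main obstacle to be purely organizational rather than conceptual: getting the Jacobian cocycle relation for $L_g$ (i.e. $J_{L_{gh}} = (J_{L_g}\circ L_h)\, J_{L_h}$) and the inverse relation aligned with the correct base points, so that the cancellation is manifest and the final identity matches the definition of equivariance of a density verbatim. A coordinate-free phrasing — working with the pullback of the volume density and noting that $\rho_h\,\mathrm{vol} = h_*(\rho\,\mathrm{vol})$ while equivariance of $\rho$ says $L_g^*(\rho\,\mathrm{vol}) = \rho\,\mathrm{vol}$, then using $L_g^* \circ h_* = h_* \circ L_g^*$ from equivariance of $h$ — sidesteps the index juggling entirely and is the version I would actually write down; the Jacobian computation above is then just the local-coordinate shadow of $L_g^* h_*(\rho\,\mathrm{vol}) = h_* L_g^*(\rho\,\mathrm{vol}) = h_*(\rho\,\mathrm{vol})$.
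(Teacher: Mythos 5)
Your proof is correct and is essentially the same argument as the paper's: both use equivariance of $h^{-1}$, the defining relation for the equivariant density $\rho$, and the chain rule applied to $h^{-1}\circ L_g = L_g\circ h^{-1}$ to cancel the Jacobian factors; you merely run the computation from $\rho_h(L_g p)$ down to $\rho_h(p)$ where the paper goes from $\rho_h(p)$ to $\rho_h(L_g^{-1}p)$, which is the same identity with $g\mapsto g^{-1}$. The closing coordinate-free remark, $L_g^*h_*(\rho\,\mathrm{vol})=h_*L_g^*(\rho\,\mathrm{vol})=h_*(\rho\,\mathrm{vol})$, is a valid and tidy restatement of the same cancellation.
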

\begin{proof}
Equivariance of $h$ means $L_g\circ h=h\circ L_g$ and it is straightforward to show that $h^{-1}$ is also $G$-equivariant. From this, the chain rule and the equivariance of $\rho$, it follows that
    \begin{equation}\label{eq:density-pf-1}
    \rho(h^{-1}(p))
    =
    \rho(h^{-1}L_g^{-1}(p))
    \,\lvert{\det{J_{L_g^{-1}}(h^{-1}(p))}}\rvert.
    \end{equation}
By a rearrangement of~\eqref{eq:induced_prob},
\begin{equation}\label{eq:density-pf-2}
        \rho(h^{-1}L_g^{-1}(p))
    =
    \rho_h(L_g^{-1}p)\frac{1}{\lvert{\det{J_{h^{-1}}(L_g^{-1}p)}}\rvert}.
\end{equation}
Finally, we use~\eqref{eq:induced_prob}, combine~\eqref{eq:density-pf-1} and~\eqref{eq:density-pf-2} and use the chain rule to obtain
\begin{align*}
    \rho_h(p)=
    \rho_h(L_g^{-1}p)\frac{\lvert{\det{J_{L_g^{-1}\circ h^{-1}}(p)}}\rvert\,\lvert{\det{J_{L_g^{-1}}(p)}}\rvert}{\lvert{\det{J_{h^{-1}\circ L_g^{-1}}(p)}}\rvert}=
    \rho_h(L_g^{-1}p)\,\lvert{\det{J_{L_g^{-1}}(p)}}\rvert.
\end{align*}
Thus, $\rho_h(p)=\rho_h(L_g^{-1}p)\,\lvert{\det{J_{L_g^{-1}}(p)}}\rvert$ for every $g \in G$ and $p \in M$, and consequently $\rho_h$ is equivariant.
\end{proof}

\subsection{Equivariant vector fields}

Next, we consider a vector field $V:M\to TM$ and the induced action of the diffeomorphism $h:M\to M$ obtained as the push-forward of $V$ along $h$. At the point $p \in M$, the transformed vector field $V_h$ is given by
\begin{equation*}
    V_h(p)[f] = V(h^{-1}(p))[f \circ h],
\end{equation*}
where $f$ is an arbitrary differentiable function defined on a neighbourhood of $h^{-1}(p)$. Note that for clarity we use $V(p)$ to denote the evaluation of $V$ at $p$ in this section. Equivariance of a vector field $V$ amounts to left invariance under $G$ in the usual sense, i.e., $V(L_gp)=(L_g)_*V(p)$ for every $g \in G$ and $p \in M$, see~\cref{dfn:vector_field_equivariance}. Similarly to the objects previously considered, the vector field $V_h$ inherits the equivariance of $V$ and $h$.

\begin{theorem}\label{thm:induced-action-vector}
Let $V$ be a $G$-equivariant vector field on $M$, and $h :M\to M$ be $G$-equivariant diffeomorphism. Then $V_h$ is $G$-equivariant.
\end{theorem}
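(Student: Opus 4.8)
The plan is to mimic the one-line computation already carried out for scalar fields and densities, but now tracking the push-forward structure. First I would record the key fact that $h^{-1}$ is $G$-equivariant: from $L_g \circ h = h \circ L_g$ for all $g$ one obtains, applying $h^{-1}$ on both sides, $h^{-1} \circ L_g = L_g \circ h^{-1}$, and hence also $(h^{-1})_* \circ (L_g)_* = (L_g)_* \circ (h^{-1})_*$ by functoriality of the push-forward (the differential of a composition is the composition of differentials). I would also note the chain-rule identity $(f \circ L_g)$ for a test function and, more usefully, the relation $(L_g \circ h^{-1})_* = (L_g)_* \circ (h^{-1})_*$ which is all I really need.

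Next I would compute $V_h(L_g p)$ directly from the definition $V_h(p)[f] = V(h^{-1}(p))[f \circ h]$. Evaluating at $L_g p$ gives $V_h(L_g p)[f] = V(h^{-1}(L_g p))[f \circ h] = V(L_g h^{-1}(p))[f \circ h]$, using equivariance of $h^{-1}$. Now I apply equivariance of $V$ (\cref{dfn:vector_field_equivariance}), $V(L_g q) = (L_g)_* V(q)$ with $q = h^{-1}(p)$, to get $V_h(L_g p)[f] = \bigl((L_g)_* V(h^{-1}(p))\bigr)[f \circ h] = V(h^{-1}(p))[(f\circ h)\circ L_g] = V(h^{-1}(p))[f \circ h \circ L_g]$. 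Using $h \circ L_g = L_g \circ h$ this becomes $V(h^{-1}(p))[f \circ L_g \circ h] = V(h^{-1}(p))[(f\circ L_g) \circ h] = V_h(p)[f \circ L_g] = \bigl((L_g)_* V_h(p)\bigr)[f]$. Since $f$ is arbitrary, $V_h(L_g p) = (L_g)_* V_h(p)$, which is exactly $G$-equivariance of $V_h$ in the sense of \cref{dfn:vector_field_equivariance}.

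The only genuine subtlety — and the step I would be most careful about — is bookkeeping the order in which push-forwards compose and making sure the base points line up: the push-forward $(L_g)_*$ in $V(L_g q) = (L_g)_* V(q)$ maps $T_q M \to T_{L_g q} M$, whereas the $(L_g)_*$ appearing in the definition of $V_h$ acts by precomposition of the test function, and one must check these are the same operator (they are, once the definition of push-forward via test functions is unwound). Everything else is a routine substitution using $h^{-1} \circ L_g = L_g \circ h^{-1}$ and its infinitesimal counterpart. I would present the argument as a short chain of equalities acting on an arbitrary test function $f$, exactly parallel in spirit to the scalar case displayed just above the theorem, and then conclude by the arbitrariness of $f$.
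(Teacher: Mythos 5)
Your proof is correct and follows essentially the same route as the paper's: evaluate $V_h(L_gp)$ on a test function, use equivariance of $h^{-1}$, then equivariance of $V$ together with the test-function definition of the push-forward. You actually spell out the final step (that $f\circ h\circ L_g = f\circ L_g\circ h$ via equivariance of $h$) more explicitly than the paper does, which is a welcome clarification rather than a deviation.
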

\begin{proof}
We want to show that $V_h(L_gp)=(L_g)_*(V_h(p))$ for all $g \in G$ and $p\in M$. Let $f$ be an arbitrary real function defined on a neighbourhood of $h^{-1}(p)$. It follows by the definition of the push-forward and the equivariance of $h^{-1}$ that
\begin{align*}
    V_h(L_gp)[f] &= V(L_g(h^{-1}(p)))[f\circ h].
\end{align*}
By the equivariance of $V$ and the definition of the push-forward, this in turn equals $(L_g)_*V_h(p)[f]$. Since $f$ is arbitrary, $V_\Phi(L_gp)=(L_g)_*V_\Phi(p)$ for all $p\in M$ and $g \in G$, which completes the proof.
\end{proof}

To illustrate the principle of the induced action and the way equivariance of $h : M \to M$ preserves the symmetry of the object being transformed, we visualise the different quantities appearing in~\cref{thm:induced-action-vector} in~\cref{fig:induced-action-vector}.

\begin{figure}[!ht]
\centering
\includegraphics[scale=0.7]{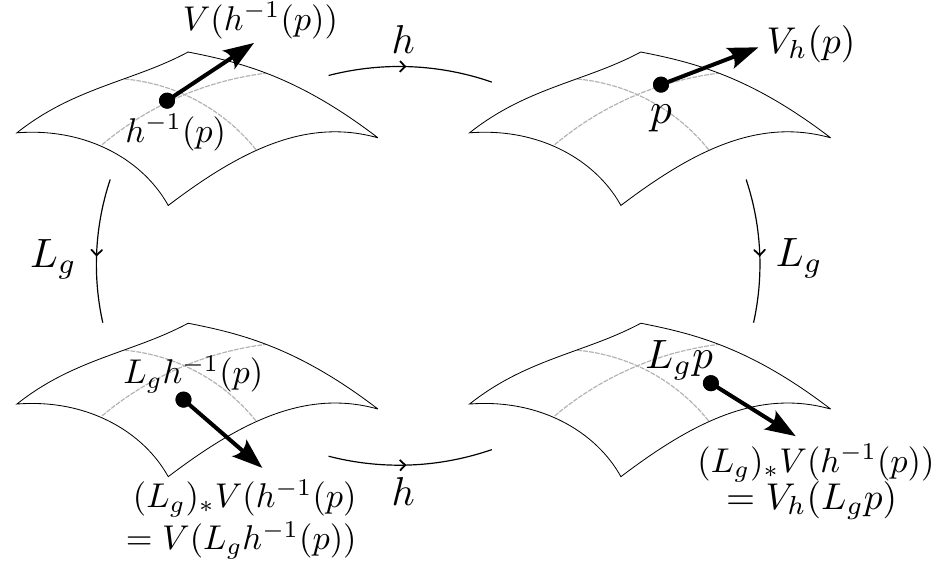}
\caption{Visualisation of the constructions in~\cref{thm:induced-action-vector}.}
\label{fig:induced-action-vector}
\end{figure}

The results in this section allow for the construction of $G$-equivariant models based on NODEs for the different kinds of geometric objects on $M$ considered. The approach uses the diffeomorphism $h: M \to M$ to induce a transformation, which defines the action of the model and is possible to generalise to other types of differential geometric objects. In particular, the induced action on vectors generalises to tensor fields on $M$.

\section{Conclusion}
In this paper, we develop a novel geometric framework for equivariant manifold neural ODEs based on the classical theory of symmetries of differential equations. In particular, for any smooth, connected manifold $M$ and connected Lie symmetry group $G$ acting semi-regularly on $M$ we establish the equivalence of the different notions of equivariance related to the Cauchy problem and show how the space of equivariant NODEs can be efficiently parameterised using the differential invariants of $G$. This generalises the idea in~\cite{Katsman2021} to use gradients of invariant functions by exploiting the fundamental role of differential invariants in the theory of symmetries of differential equations.

Subsequently, we show how the manifold $M$, and the neural ODEs, can be augmented to obtain models which are universal approximators of diffeomorphisms $h: M \to M$. Our construction extends previous work in the Euclidean case~\cite{Zhang2020}, to the manifold setting and provides the appropriate geometric framework to establish universality of augmented manifold NODE models meaning that they are capable of learning any  diffeomorphism $h:M \to M$. Furthermore, we show that the augmentation is equivariant with respect to a non-trivial symmetry group $G$ acting on the manifold and that universality persists, meaning that augmented equivariant manifold NODEs are universal approximators of equivariant diffeomorphisms $h:M \to M$. In the equivariant case, we also show how the augmentation fits in our geometric framework by parameterising the space of augmented equivariant manifold NODEs using second-order differential invariants.

Finally, following previous works on manifold NODEs, we discuss how different kinds of feature maps can be modelled using the induced action of $h$ on different types of fields on $M$. As a prominent example of learning fields beyond densities, we consider explicitly vector fields found in many physical applications. Our framework could, for example, be used for learning vector fields on the sphere, which has applications, e.g., in omnidirectional vision~\cite{Coors2018}, weather forecasting and cosmological applications~\cite{Perraudin2020}.

In our formulation, all calculations involving the group $G$ are concerned with determining the differential invariants $I_1,\ldots,I_{\mu}$ which is analytically tractable due to the fact that $X^{(k)}$ is a linear operator. There are several computer algebra software packages dedicated to symmetries of differential equations which can be used to perform these calculations. The invariants are computed prior to implementation and training and require no discretisation of the group $G$. Compared to group equivariant CNNs, in our framework integration over $G$ or $G/H$ in the convolutional layers is replaced by computing derivatives on $J^{(1)}$ (see Prop. 2.53 in \cite{Olver1993}).

It is interesting to note that the notion of symmetry transformations of differential equations typically only requires a local action of $G$, as compared to the global action on homogeneous spaces $M=G/H$ emphasised in \cite{Kondor2018}. In particular, in contrast to the group equivariant networks, there is no integration over the group $G$. In fact, the only computations using the group $G$ required during training and inference are straightforward function evaluations of the invariants. Since the invariants only depend on the manifold and symmetry group, they only need to be computed once for each combination of $M$ and $G$ and can then be applied in NODE models of any dataset with the corresponding differentiable structure and symmetry group.

The structure of the invariants $I_1,\ldots,I_{\mu}$ can be used to impose further constraints on the model by excluding a subset of the $I_k$ in the construction of the ODE $\dot{u}(t) = \phi(I_1,\ldots,I_{\mu})$. This corresponds to enlarging the symmetry group and the constraints are 'physically' meaningful since they are invariant under the action of $G$. A very interesting direction for future work would be to investigate this approach systematically and more generally explore the connections between invariants in our geometric framework to conservation laws in the physical sciences~\cite{Greydanus2019,Lutter2019,Toth2020,Cranmer2020}.

To understand the merits and limitations compared to other network architectures, the performance that can be obtained by equivariant manifold NODE models -- both in our framework and those of others -- in realistic applications requires significant further investigation. In this work, we focus on the mathematical foundations of equivariant manifold NODEs and their augmentation, but there is a substantial body of experimental results in the literature (see~\cref{sec:related-work}) showing that NODE models are practically feasible and viable as approximators. In future work, we plan to continue the exploration of the modelling capabilities of NODEs for different types of applications and feature maps (or fields) on topologically non-trivial manifolds $M$ with symmetries.

\section*{Acknowledgements}
We thank Jan Gerken for helpful discussions. The work of E.A., D.P., and F.O. was supported by the Wallenberg AI, Autonomous Systems and Software Program (WASP) funded by the Knut and Alice Wallenberg Foundation.

\bibliographystyle{unsrtnat}
\bibliography{cites}

\end{document}